\newtheorem{proposition}{Proposition}[section]
\newtheorem{theorem}{Theorem}[section]
\newtheorem{corollary}{Corollary}[theorem]
\newtheorem{definition}{Definition}[section]
\newtheorem*{remark}{Remark}
\title{Generalization by design: Shortcuts to Generalization in Deep Learning}
\author{%
  Petr Taborsky\\
  Technical University of Denmark\\
  Department of Applied Mathematics and Computer Science\\
  Richard Petersens Plads, 321, 2800 Kgs. Lyngby, Denmark\\
  \texttt{ptab@dtu.dk} \\
  \And
  Lars Kai Hansen \\
  Technical University of Denmark\\
  Department of Applied Mathematics and Computer Science\\
  Richard Petersens Plads, 321, 212, 2800 Kgs. Lyngby, Denmark\\
  \texttt{lkai@dtu.dk} \\
}
\begin{document}

\maketitle

\begin{abstract}
We take a geometrical viewpoint and present a unifying view on supervised deep learning with the Bregman divergence loss function - this entails frequent classification and prediction tasks. Motivated by simulations we suggest that there is principally no implicit bias of vanilla stochastic gradient descent training of deep models towards "simpler" functions. Instead, we show that good generalization may be instigated by bounded spectral products over layers leading to a novel geometric regularizer. It is revealed that in deep enough models such a regularizer enables both, extreme accuracy and generalization, to be reached. We associate popular regularization techniques like weight decay, drop out, batch normalization, and early stopping with this perspective. Backed up by theory we further demonstrate that "generalization by design" is practically possible and that good generalization may be encoded into the structure of the network. We design two such easy-to-use structural regularizers that insert an additional \textit{generalization layer} into a model architecture, one with a skip connection and another one with drop-out. We verify our theoretical results in experiments on various feedforward and convolutional architectures, including ResNets, and datasets (MNIST, CIFAR10, synthetic data). We believe this work opens up new avenues of research towards better generalizing architectures.
\end{abstract}

\section{Introduction and Contributions}
State of the art deep neural networks are trained with some form of regularization, including well-known tools such as weight decay, dropout, batch normalization \cite{ioffe2015batch} used to train deep ResNet architectures \cite{he2016deep} or more loosely regularization is achieved by "early stopping". Regularization is thought to keep weights 'under control' during training and thus positively impacts its generalization properties by reducing overfitting. Figure \eqref{fig1:weights_hist} suggests that the distribution of the weights indeed is related to how the model generalize on a previously unseen data, as argued by many \cite{huh2021low,jacot2018neural, kawaguchi2017generalization, goodfellow2016deep}.  We note that in most cases regularization is applied without analyzing how it may potentially interfere with the structure of the network.
\begin{figure*}[ht!]
    \centering
\begin{subfigure}[t]{.3\linewidth}
\centering
  \includegraphics[width=.95\linewidth]{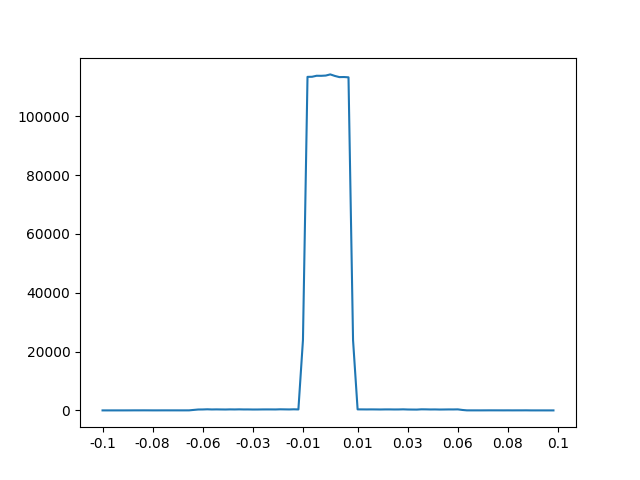}
   \caption{Initialization}
\end{subfigure}
\begin{subfigure}[t]{.3\linewidth}
\centering
  \includegraphics[width=.95\linewidth]{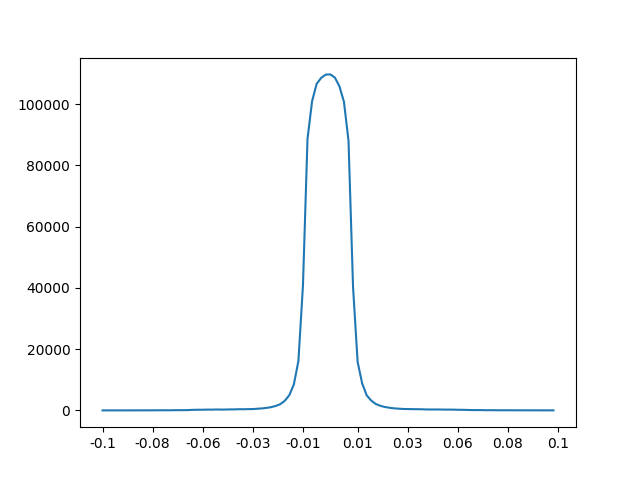}
   \caption{True labels}
\end{subfigure}
\begin{subfigure}[t]{.3\linewidth}
\centering
  \includegraphics[width=.95\linewidth]{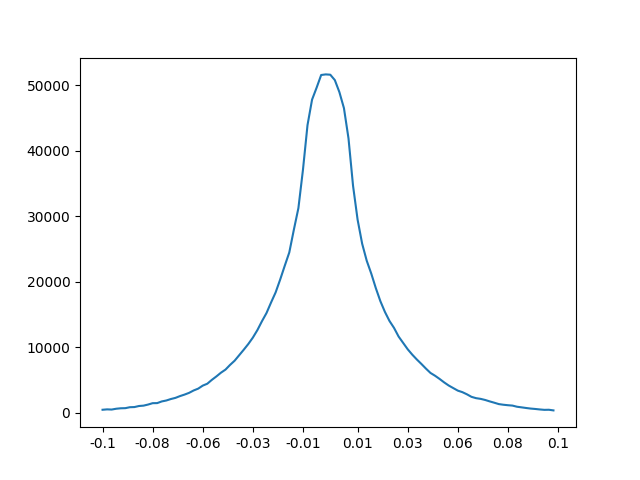}
   \caption{Random labels}
\end{subfigure}
\caption{\textbf{Histogram of weights of a convolutional network classifier of handwritten digits (MNIST) after zero training error has been reached}. Figure (a) depicts weights distribution after standard (He) initialization, while Fig.(b) shows a distribution of weights after reaching 98.6 test accuracy and Fig.(c) reports same but in case of fitting random labels as described in \cite{zhang2016understanding} (test accuracy corresponds to random choice. i.e. approx. 10\%). We can observe that a more complex model (c) has significantly heavier tails compared to the fully convergent and well generalizing close to optimum model (b) and that a distribution of weights, especially its tails, may carry an information on generalization properties.}
\label{fig1:weights_hist}
\end{figure*}


Let alone the practical success and well developed theory, is omnipresent regularization really necessary in deep learning? There is a very wide and active area of research on an implicit regularization of deep learning with stochastic gradient descent, see our section on related work for more details. Besides many provable benefits, SGD alone seems to be not enough to ensure a good performance outside of training data however. In Fig.\ref{fig:overfitting} we show that three feed forward models of different depths, 1, 7 and 118 hidden layers, all overfit when trained by vanilla SGD for long enough time. So in the light of this "counter example" it seems some sort of regularization or "early stopping" is indeed needed. Here our aim is to improve network generalization by taking the architecture of the model into account and design a regularizing layer as an integral part of the model, we call it - \textit{ regularization by design}.

Without much loss of generality we consider loss functions of Bregman divergences in this paper. The choice of the Bregman divergence loss may seem limiting at the first glance but as shown in \cite{banerjee2005clustering} the Bregman divergence encompasses many common loss functions, such as squared loss, max. likelihood (KL-divergence), cross-entropy etc. used to train both predictive and classification  models. In section \ref{sec:theory} the Bregman divergence loss allows us to develop a geometrical perspective on generalization and characterize "well" generalizing networks in weight parameter space by using products of weights along the input-output paths, motivated by Fig.\ref{fig1:weights_hist} and aforementioned work.

We present 1.) a novel approach to generalization in neural networks - generalization by design. Following the theory presented in section three, we introduce a "generalization" layer in two forms. One encoded by skip connections the other using dropout. Each of them has specific training regime. As opposed to other regularization methods this generalizing layer is encoded into an architecture of a the model. We further design 2.) a practical method and demonstrate experimentally that this enhanced architecture significantly improves generalization, see Table.\ref{tab:ResNet_comparison}, and is applicable across classification and regression models in general. Finally, 3.) section three and discussion derives novel geometrical perspective, labelled Learning in the Manifold of Distributions, that establishes the arguments for a long puzzling problem of the extreme accuracy and generalization, that has often been reached by deep networks, observed but not yet fully understood \cite{zhang2016understanding}.

The paper is organized as follows. After this introduction, Section 2 lays out a geometrical perspective on SGD training and generalization of both classification and prediction neural networks, Learning in the Manifold of Distributions. Further, it describes our proposed "generalization by design" approach. Section 3 will cast the most commonly used regularization techniques, e.g., the dropout, batch normalization, early stopping, weight decay into the geometrical perspective of the previous section. Experiments on MNIST, CIFAR10, and synthetic data in Section 4 provide supporting evidence for the claim.

\section{Related Work}

Inspired by the work of \cite{hauser2018principles} and \cite{amari2016information} we aim to provide an intuitive and unifying geometric perspective on  generalization in deep networks. We investigate the long puzzling problem of why and how deep learning models generalize to unseen data so well despite their ability to fit arbitrary functions (random labels) \cite{zhang2016understanding}.

The work of \cite{hauser2018principles} uses a  differential geometric formalism on smooth manifolds to (re)define forward pass and back-propagation in a "coordinate" free manner. This allows us to see the layers of neural network as the "coordinate representations" of data (input) manifold $M$. As the number of nodes changes as one moves through the layers of the neural network we effectively change the dimensionality used by the neural network to represent the data manifold. The dimension of the underlying data manifold is the number of dimensions in the smallest "bottleneck" layer while all other layers are  immersion/embedding representations. For details we refer reader to \cite{hauser2018principles}.

Conveniently, the coordinates of an \textit{input} (data) layer are Cartesian and thus all data points are embedded into this smooth Riemannian manifold with $\ell_2$-norm induced inner product. A topology of this embedding space provides a well defined neighbourhood of all data points and allows us to analyze its behaviour\footnote{Note on a smoothness assumption on $f$}.

Overall \cite{hauser2018principles} works with an empirically supported assumption that neural network learns a sequence of coordinate transformations to put data into a flattened form, i.e., it is assumed that \textit{output} manifold is Riemannian and flat, that is the Riemannian metric tensor is constant.

Further a recent work \cite{he2020resnet, zhang2019towards} shows that identity connections, also called "shortcuts", besides counteracting vanishing gradients and degradation by depth, see. \cite{he2016deep}, help residual networks \cite{he2016deep, rousseau2018residual} to generalize well. Use of shortcuts of varying length motivated by modelling suitable representations of inputs are used in U-nets \cite{ronneberger2015u} and DenseNets \cite{huang2017densely} to improve an accuracy and convergence.

Many have investigated the dynamics of stochastic gradient descent (SGD), see e.g.  \cite{huh2021low, ali2020implicit,roberts2021sgd,smith2021origin, chizat2020implicit, volhejn2021does, kalimeris2019sgd,nakkiran2019sgd} and more, under the hypothesis that SDG is implicitly biased towards "simple" functions and as such regularizes itself. It provably finds minimum norm solutions in specific. i.e. convex, least squares optimization problems \cite{ali2020implicit}. Yet \cite{} shows that min. norm solution is not enough to guarantee good generalization in deep enough networks. Other line of works shows SGD to spend exponentially longer time in shallow minima \cite{xie2020diffusion,li2017stochastic} if approximated by Langevin dynamics with Gaussian noise. Bayesian and approximate inference based works \cite{smith2017bayesian} with many more interesting properties.

\begin{figure*}
    \centering
\begin{subfigure}[t]{.3\linewidth}
\centering
  \includegraphics[width=.95\linewidth]{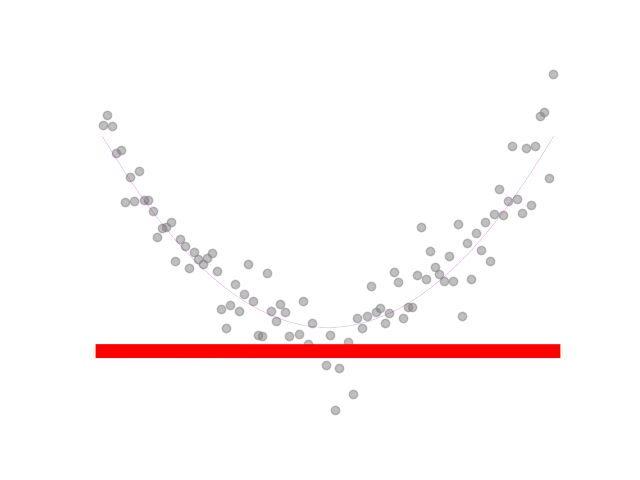}
   \caption{epoch 1}
\end{subfigure}
\begin{subfigure}[t]{.3\linewidth}
\centering
    \includegraphics[width=.95\linewidth]{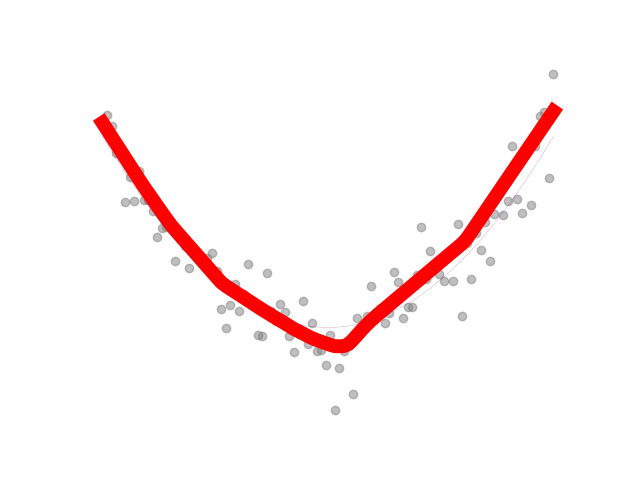}
    \caption{epoch 500}
\end{subfigure}
\begin{subfigure}[t]{.3\linewidth}
\centering
    \includegraphics[width=.95\linewidth]{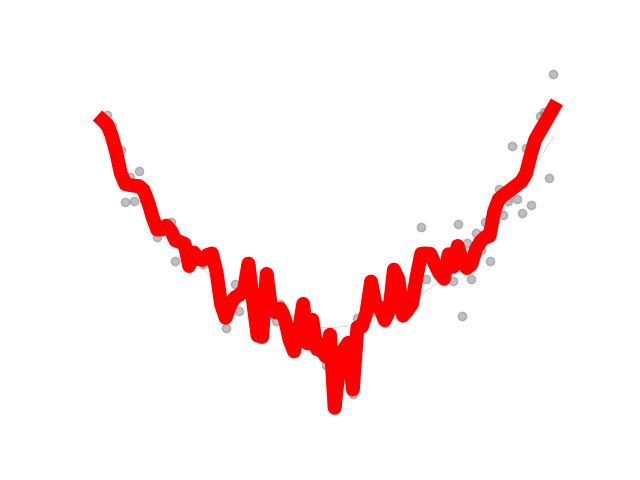}
    \caption{epoch 10,000}
\end{subfigure}
\begin{subfigure}[t]{.3\linewidth}
\centering
  \includegraphics[width=.95\linewidth]{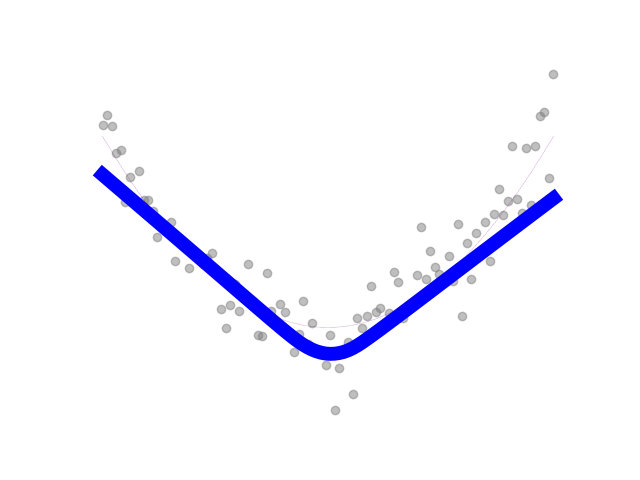}
   \caption{epoch 1}
\end{subfigure}
\begin{subfigure}[t]{.3\linewidth}
\centering
    \includegraphics[width=.95\linewidth]{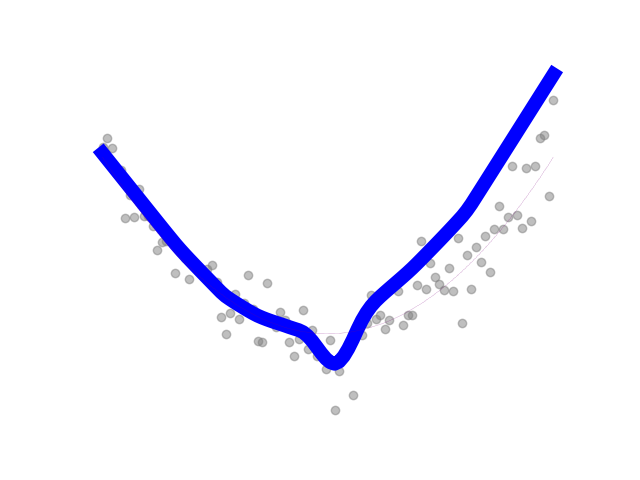}
    \caption{epoch 500}
\end{subfigure}
\begin{subfigure}[t]{.3\linewidth}
\centering
    \includegraphics[width=.95\linewidth]{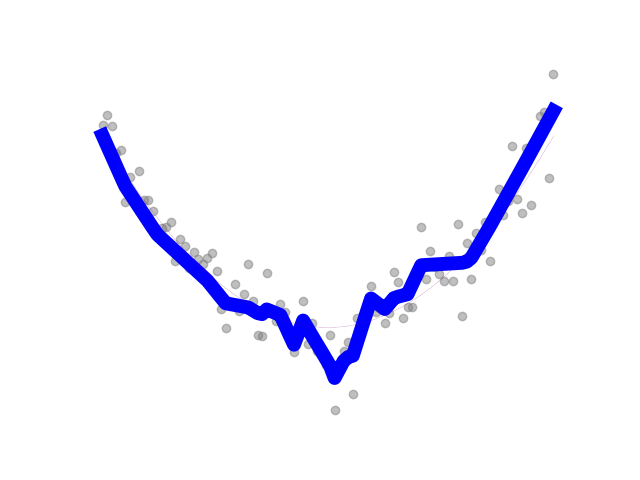}
    \caption{epoch 10,000}
\end{subfigure}
\begin{subfigure}[t]{.3\linewidth}
\centering
  \includegraphics[width=.95\linewidth]{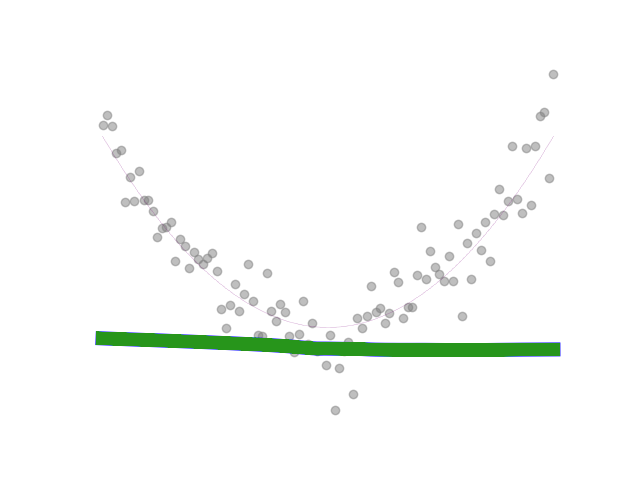}
   \caption{epoch 1}
\end{subfigure}
\begin{subfigure}[t]{.3\linewidth}
\centering
    \includegraphics[width=.95\linewidth]{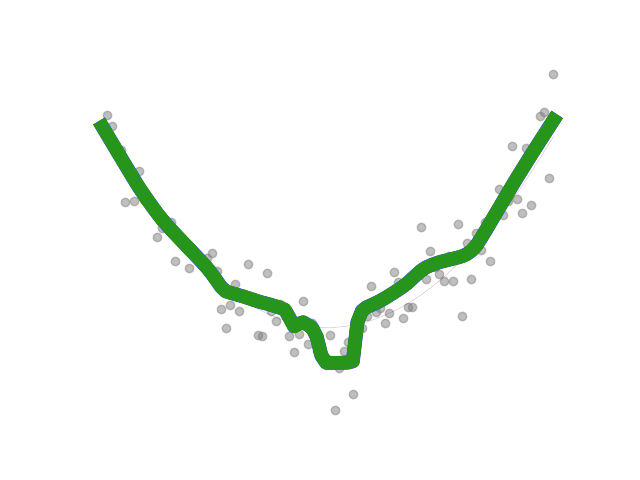}
    \caption{epoch 500}
\end{subfigure}
\begin{subfigure}[t]{.3\linewidth}
\centering
    \includegraphics[width=.95\linewidth]{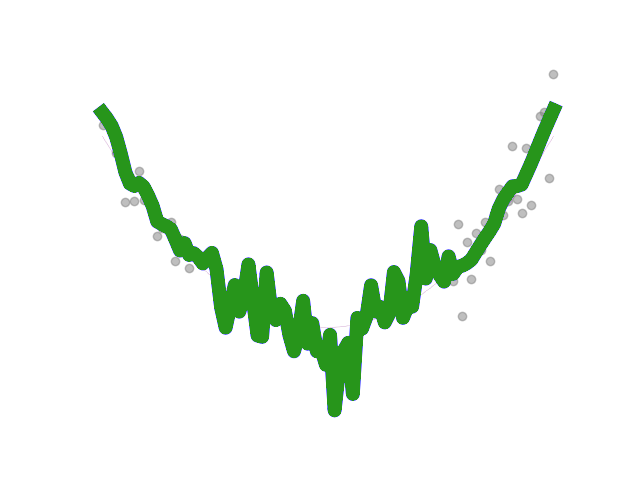}
    \caption{epoch 10,000}
\end{subfigure}
\caption{\textbf{Does vanilla SGD implicitly self-regularize?} Fitting a noisy quadratic function: $g(x)=10 + 1.22x^2 + \epsilon, \epsilon \sim N(0,10^{2})$ by a feed forward network (FFN) with 1 hidden layer (blue) and 7 hidden layers (red) and residual FFN with 119 hidden layers and skip connections (green). All models use ReLu activations and have comparable number of parameters, i.e., 35,989 (blue) and 35,998 (red) and 36160 (green). They are trained by vanilla SGD for over 10,000 epochs with a constant learning rate of $\eta=10^{-5}$ and no regularization or batch normalization in case of ResNet. We can observe that independent of the depth or architecture of the models they all overfit a training data eventually, as reported in the right most column. A training data consists of randomly generated values of $g(x)$ on 100 data point grid on the interval (-8,8) (gray), test data similarly on interval (-2,2).}
\label{fig:overfitting}
\end{figure*}

\section{Learning in the Manifold of Distributions} \label{sec:theory}

Following the approach outlined in the Introduction section and using the work of \cite{hauser2018principles} as a basic concept developed furhter in this section we see layers of the neural network model as a coordinate representations of a data manifold $M$. The neural network $f$ is than a composition of layer-to-layer maps between these coordinate representations of $M$ and parametrized by the learnable weights. Altogether it forms by assumption a smooth\footnote{ReLu can be seen as a limit case of smooth NN models using softmax activations, see comments in the end of the section} input-output map. 

The work of \cite{zhang2016understanding} showed that flatness of the weight space is insufficient to capture generalization in neural networks. This section rethinks a generalization of neural network $f$ by first, endowing the output layer with dually flat Riemannian geometry and then, second, pulling its metric back to an input layer along $f$ making use of differential geometry. 

Having established link between known metric spaces\footnote{If loss function is Bregman divergence than output layer is dually flat Riemannian manifold with Fisher information matrix as a metric tensor, see \cite{amari2016information}} we define a local generalization of network $f$ around every training datum $\bm{x}$ by "flatness" of this pulled-back metric (a bilinear form at given $\bm{x}$ defined by its symmetric matrix) and weight compositions of $f$, formalized in the main Proposition 3.1. of this section. In the last step, we derive global generalization from local one by applying local properties on every training datum and deriving global sufficient conditions for inducing local generalization (Corollary \ref{corollary_3}).

As noted in \cite{hauser2018principles} an input layer is a rather arbitrary representation of a data manifold $M$. For example an RGB representation of the image used for image recognition is used just because RGB is a convenient format for image software and displays. However it is not a good representation for labeling/image recognition task. 

On contrary an output layer is by definition and by our choice of loss function, i.e., Bregman divergence, a regular probabilistic model best suited for a given modelling task. For example for a binary classifier with cross entropy loss an output layer is a regular one dimensional \footnote{dually flat} parameter manifold corresponding to Binomial distribution (of Exponential family) with a cumulant function $\psi(\theta)=-\log(\theta)$ and spanned by natural parameter $\theta=\log(\frac{p_0}{1-p_0})$, where $p_0$ defines probability of a particular class, e.g. '0'. From this modelling perspective the data manifold is best and intuitively represented by the output layer.

Further following \cite{hauser2018principles} and using machinery of differential geometry that provides a necessary formal justification a neural network can be seen in reverse as embedding/immersing an output (dually) flat\footnote{see Supplementary material and \cite{amari2016information} for details} layer into an arbitrary input layer. This embedding/immersion is being updated over the course of training driven by optimization in the output layer and (back)propagating the output layer gradient towards the input layers. In this perspective a choice of Bregmann loss defines a probabilistic Exponential family model in the output layer whose number of linearly independent parameters is upper bounded by dimensionality of the output.

Following the binary classification example above, an output layer model is a one-dimensional unimodal convex model parameterized by $\theta$ or by transformed $p_0$. In the setting of deep learning\footnote{as well as in linear regression or Gaussian processes} the neural network model $f$ maps data points $\bm{x}_s$, with $s$ indexing training data samples, to the point $\theta_s$ in the output manifold of Binomial distributions - that is for every pair of input target $(\bm{x}_s, \bm{y}_s)$ it is a different input dependent distribution being fitted. In other words, a map $f$ fits input dependent metric tensor to the geometry of output layer model of targets\footnote{see the Appendix for more on dually coupled Exponential Families} $\bm{y}_s)$. Every $f(\bm{x}_s,\bm{w})$ gives rise to an inner product, that is in case of induced Exponential family the Fisher Information Matrix, $FIM(\theta_s)\mid_{\theta_s=f(\bm{x}_s,\bm{w})}$, of the output layer manifold that depends on $s$ and thus differs across data points in general. For every $s$ a mode (in this case ML estimate of $p_0$) of the output layer model maps to a multitude of weight space optima \cite{amari2016information}, section 12.2. They all would have the same level of loss, however.

In just outlined perspective a learning proceeds in this manifold of distributions and the overall generalization of the model $f$ is determined by generalization properties of all training data-point dependent models as opposed to standard one probability model view. 

Further we follow the machinery of differential geometry and use the neural network $f$ to "pull" the output layer (data dependent) inner product FIM($f(\bm{x}_s,\bm{w})$) "back" to the input layer (a so called "pull-back metric" \cite{hauser2018principles}), see Fig.10 in Supplementary Material (Appendix). 

This operation "imprints" specific properties of $f$ into this "pulled back inner product" which is symmetric bilinear positive \textit{semi}definite form, $\langle \cdot , \cdot \rangle_{f(\bm{x}_s)}$ defined in the neighbourhood of $(\bm{x}_s)$ and characterized by its matrix, see \cite{gantmakher1959theory, bhatia1997a}. When we use "flatness" of bilinear form we have in mind the operator norm of the matrix of this bilinear form. We'll abuse notation for brevity throughout the paper.

Notably while a flatness of this bilinear form at $\bm{x}_s$, or rather its related quadratic form $\langle \bm{x} , \bm{x} \rangle_{f(\bm{x}_s)}$ for $\bm{x}$ in the neighbourhood of $(\bm{x}_s)$, captures the generalization of the map $f$ naturally using the curvature of pulled back inner product (and its eigenvalues, see Fig.10 in Appendix), it differs essentially from a flatness of $f$ around $\bm{x}_s$. While the pulled back metric at $\bm{x}_s$ depends on the first derivatives of $f$ (Jacobian), given by backpropagation or chain rule of derivative (see Supplementary Material, Fig. 1 and Eq.(4) "Back-propagated Inner Product of the Manifold of Distributions") the flatness of $f$ would be defined by the second derivatives of $f$ (Hessian).

In short and fundamentally, it is the Jacobian of $f$, involved in this quadratic form, not the Hessian, that defines generalization in this paper.

On an example of the ReLu network, Hessian gives a constant flat structure ($f$ is piece-wise linear) while generalization, defined by the flatness of pulled back metric above, depends on products of weights that are activated at particular data input, $\bm{x}_s$, promoting weights configurations resulting in "small" products. This idea underpins Proposition 3.1 \ref{theorem:generalization_theorem}, one of the main results of the paper.

Following the idea we use "flatness" of the quadratic form $\langle \bm{x} , \bm{x} \rangle_{f(\bm{x}_s)}$ seen as a function of $\bm{x}$ in the neighbourhood of $\bm{x}_s$ to define local generalization properties of the model. A global generalization arises from the local one applied "for every $\bm{x}_s$".

In particular, getting back to \cite{zhang2016understanding}, fitting random labels viewed from the perspective above would correspond to fitting the individual models around training data points exactly, yet pulled back inner products would have large curvatures, given by randomness in labels of neighboring training data and thus resulting in bad generalization properties. On contrary, if the generalization of the kind above was enforced (by methods developed later on in the paper) the randomly initialized model would not converge to solution fitting random labels. In other words, upon "good generalization" constraints on weights, the weight configuration fitting random labels becomes unreachable.

This "rethought" generalization is a basis for our results that will materialize in Proposition \ref{theorem:generalization_theorem} capturing the idea of local generalization. 

Along the lines above the Corollary \ref{corollary_3} then finds an upper bound of the weight products that will be used to design a \textit{generalization layer} structure that would instigate the flatness of related pulled back metric around \textit{every} training data point and thus enforce global generalization of the model.

\subsection*{Generalization in the Manifold of Distributions}

Following the previous section a generalization on the output layer at datum $x_s$ is given by spectral properties of the Fisher information matrix (FIM) at $f(x_s)$ of the chosen probability model over targets $y_s$. The flatter the landscape w.r.t. outputs $f(x)$ playing the role of natural parameters, see Appendix \ref{sec:appendix_bregman}, the better. Nevertheless we'd like to know how the log likelihood changes with regards to an input layer and how does it depend on the model parameters. 

From now on we use upper indices to denote coordinates while lower one index vectors/tensors. Also the Einstein summation is used whenever pair of indexes appears in the equation to simplify notation. Let's further denote $g_{i,j}(\xi)=\langle \rm{e}_i, \rm{e}_j \rangle$ an inner product on the input space. It is a constant identity matrix in our case, i.e. $g_{i,j}(\xi)=\delta_{i,j}$. 

Without loss of generalization consider $f$ being a smooth $\mathcal{C}_2$ non-invertible map $f:I \xrightarrow{} F$ between input manifold $I$ with a coordinate system denoted $\xi^i$ and output manifold $F$ with a coordinate system $\theta^{\kappa}$ \footnote{As such it defines push-forward operator $f^*$ that acts on tangent spaces: $f^*:TI \xrightarrow{} TF$ and it can be viewed as a generalized coordinate free derivative, see. \cite{hauser2018principles} and Supplementary Material}.

Because an input space is by our choice of input data representation the Euclidean space with an ortho-normal basis it collides with its tangent space. If we for a sake of brevity take for granted that a back-propagation is well defined and is formally equivalent to a coordinate free chain rule over manifolds, for details see \cite{hauser2018principles}, we can skip a formalism of a differential geometry and use an ordinary derivatives and chain rule instead of a directional derivatives in manifolds and a right Lie-group actions (defined using "pull-backs") on a frame bundles.

Having in mind this simplification (of retracting to formalism of linear algebra instead of differential geometry one) an infinitesimally small line element $d\theta_{\kappa}$ in output (tangent) space $TF$ relates to an input $dx^i$ vector by Jacobian $J^{\kappa}$
\begin{align}
    d\theta_{\kappa}=J^{\kappa}_i dx^i \label{eq:small_line_element}
\end{align}
where Einstein summation over index $i$ is used.

Similarly for an output layer (a probabilistic manifold of probability distributions defined by a choice of Bregmann loss function corresponding to a cumulant function $\psi$) we have its metric tensor defined as 
\begin{align}
    g_{\kappa,\lambda}(\theta)=E[\partial_{\kappa}\log p(\tilde{\bm{y}},\theta) \partial_{\lambda}\log p(\tilde{\bm{y}},\theta)] \text{ (FIM) }=\partial_{\kappa}\partial_{\lambda}\psi(\theta) \label{eq:FIM_as_hessian}
\end{align}
where $\tilde{\bm{y}}=\nabla \varphi(\bm{y})$ is a random variable of the output layer Exponential family distribution derived from a dual Bregman divergence, $\varphi$ and $\psi$ being convex conjugates, for details see Supplementary material and \cite{banerjee2005clustering} and where the second equation is a consequence of a dual flatness, \cite{amari2016information}. 

Because $f$ is not (invertible) coordinate transformation, to relate distances of input and output coordinate representations properly is not trivial and it is done formally in \cite{hauser2018principles}. However, formally involved an idea follows a geometric intuition as follows. 
We leverage an induced geometry of the output layer which is given by our choice of the loss, i.e. probabilistic model of the output layer. Note that by constraining ourselves to use of Bregman divergence as a loss\footnote{that covers common cost functions, such as log-likelihood (KL divergence), cross-entropy, $\norm{}_2$, etc., used for both classification and prediction models, see \cite{banerjee2005clustering}} both inner products are known and thus $f$, as a smooth map by assumption, relates distances between input and output (tangent) spaces. We emphasize that this combination of input and output geometry is a crucial element that enables us to advance previous works on the topic.
Omitting rigorous definitions for brevity (see Appendix \ref{sec:appendix_proof} for more) we follow an intuitive notion of a "generalization" of $f$ around datum $x$: the larger the vicinity of given input point $x$ is mapped into a fixed output space 'around' $f(x)$, i.e., a unit hyper-ball, the better function $f$ generalizes in its vicinity. A formal build-up, definitions and proof, and additional explanatory figures are deferred to the accompanying Supplementary material in Appendix \ref{sec:appendix_proof}.

\begin{proposition}[Informal]\label{theorem:generalization_theorem}
In the context of the above, assuming activation functions used in architecture of $f$ are 1-Lipschitz a pull-back metric from an output layer into an input Euclidean manifold around a datum $x$ is up to a constant defined by the following positive semidefinite matrix:
\begin{align}
    \zeta(x) &= P(x) {\zeta_{\psi}(\bm{x})} P^T(x) \label{eq:prop31_zeta}
\end{align} 
where $P(x)$ is a real matrix $\{p_{i,j}\}$ with elements:
\begin{align}
    p_{i,j} &:= \sum\limits_{p \in \mathcal{BP}} {}^i_j\pi_p L_p(x), \label{col_2:pathproducts}
\end{align}
$\zeta_{\psi}(x):=\partial_{\lambda}\partial_{\lambda}\psi(f(\bm{x}))$ is fully determined by the chosen Bregman divergence loss. Further $L_p(x)$ is a positive real function formed by products of activation functions' derivatives along the path $p$ such that $0 \leq L_p(x) \leq 1$ and where $\mathcal{BP}$ is a set of all "back-propagation" paths connecting any input layer node to an output layer node through the network $f$ such that each layer has exactly one node present in the path. Then ${}^i_j\pi_p = \prod\limits_{\{l:w_l \in p\}} w_l$ is a product of all weights from an input node $i$ to an output node $j$ along the path $p$.
\end{proposition}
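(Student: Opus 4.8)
The plan is to strip the differential-geometric machinery down to a linear-algebra identity, expand the Jacobian of $f$ combinatorially over network paths, and then read off the bounds from 1-Lipschitzness. Concretely, by Eqs.~\eqref{eq:small_line_element}--\eqref{eq:FIM_as_hessian} an input tangent vector $dx$ is pushed forward to the output tangent vector $d\theta_\kappa=J^\kappa_i\,dx^i$ by the Jacobian $J$ of $f$, and the output manifold carries the Fisher metric $g_{\kappa\lambda}=\partial_\kappa\partial_\lambda\psi$, i.e. the matrix $\zeta_\psi(x)=\nabla^2\psi\big(f(x)\big)$, which is positive semidefinite because the cumulant $\psi$ is convex. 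The pull-back of a bilinear form along a smooth map is by definition the congruence $J^\top g\,J$; since the input chart is Cartesian with $g_{ij}(\xi)=\delta_{ij}$, the matrix of the pulled-back quadratic form at $x$ is $J^\top(x)\,\zeta_\psi(x)\,J(x)$, up to the overall normalization hidden in ``up to a constant.'' Setting $P(x):=J^\top(x)$, so that $p_{ij}=\partial f^j/\partial x^i$, yields \eqref{eq:prop31_zeta}, and positive semidefiniteness of $\zeta(x)$ is then immediate because congruence preserves the PSD cone. The rigorous statement that the pull-back over frame bundles collapses to $J^\top g J$ when the input is Euclidean is exactly what \cite{hauser2018principles} establishes, so I would cite it rather than redo it.

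The substantive step is the combinatorial expansion of $p_{ij}$. Writing $f$ as an alternating composition of affine maps $W_l$ (bias terms drop out under $\partial/\partial x$) and coordinate-wise activations $\sigma_l$, I would prove by induction on the number of layers $L$ that
\begin{align}
  \frac{\partial f^j}{\partial x^i}=\sum_{p\in\mathcal{BP}:\,i\to j}\ \Bigl(\prod_{\{l:\,w_l\in p\}} w_l\Bigr)\Bigl(\prod_{\{l:\,w_l\in p\}}\sigma_l'\big(a_l(x)\big)\Bigr),
\end{align}
where a path $p$ picks exactly one node in each hidden layer, $a_l(x)$ is the pre-activation feeding that node, and the sum ranges over all such paths from input node $i$ to output node $j$. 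The base case is a single affine-plus-activation layer, and the inductive step is the chain rule: left-multiplying the ``paths-to-layer-$l$'' matrix by $W_{l+1}\,\mathrm{diag}(\sigma_{l+1}')$ distributes one new weight factor and one new activation-derivative factor onto each extended path, which is precisely the $L{+}1$-layer statement. Identifying ${}^i_j\pi_p:=\prod_{\{l:\,w_l\in p\}}w_l$ and $L_p(x):=\prod_{\{l:\,w_l\in p\}}\sigma_l'(a_l(x))$ turns this into \eqref{col_2:pathproducts}.

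Finally I would check the claimed range of $L_p$: each activation being 1-Lipschitz gives $|\sigma_l'|\le 1$ pointwise, and taking the activations to be monotone non-decreasing gives $\sigma_l'\ge 0$, so every factor lies in $[0,1]$ and hence $0\le L_p(x)\le 1$; for ReLU this degenerates to $\sigma_l'\in\{0,1\}$, recovering the ``only activated paths contribute'' picture used in the discussion preceding the statement. The constant in ``up to a constant'' absorbs the normalization between the line element $d\theta_\kappa$ and the unit output hyper-ball used to define local generalization, which I would pin down in the appendix version. The main obstacle I anticipate is not any single calculation but making the path bookkeeping uniform across architectures: for residual/skip connections one must replace the strict layer graph by the network's computational DAG (inserting an ``identity weight'' $1$ on shortcut edges, and likewise viewing convolutions as sparse affine maps), and then argue that a node fed by several earlier layers still receives the correct pre-activation argument $a_l(x)$ so that the single path set $\mathcal{BP}$ and the single index pair $(i,j)$ genuinely cover all cases — this is the delicate point, and once it is set up the induction above goes through verbatim.
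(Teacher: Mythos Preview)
Your approach is essentially identical to the paper's: identify $P(x)$ with the Jacobian of $f$ so that \eqref{eq:prop31_zeta} is just the matrix form of the pull-back definition $\zeta^{i,j}=J_i^{\kappa}\,\partial_{\kappa}\partial_{\lambda}\psi\,J_j^{\lambda}$, and then expand the Jacobian over layers via the chain rule to obtain the path-sum \eqref{col_2:pathproducts} with the 1-Lipschitz bound on the activation-derivative factors. Your version is in fact more thorough than the paper's two-line appendix proof --- the explicit induction over layers and the observation that monotonicity (not just 1-Lipschitzness) is what secures $L_p(x)\ge 0$ are details the paper leaves implicit.
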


If we follow an idea of generalization of $f$ being described as a "flatness" of a positive semidefinite bilinear form\footnote{or rather the operator norm of its matrix but we'll keep abusing notation as noted before}, around a datum $x$, then according to the preceding Proposition \eqref{theorem:generalization_theorem}, $\zeta(x)$ can be used to asses the degree of this flatness by looking at the eigenvalues of the positive definite matrix $\zeta(x)$. Next corollary shows that the degree of flatness is upper bounded by the products of the largest eigenvalues of $\zeta(x)$.

\begin{corollary}[Spectral products, Informal] \label{corollary_3} Under conditions of Proposition \ref{theorem:generalization_theorem} the largest eigenvalue of $\zeta(x)$ from Eq.\ref{eq:prop31_zeta} can be bounded from above by a following product of eigenvalues:
\begin{align}
    \sigma_{\psi}(x) C\prod\limits_{l \in L_{f}} \sigma^2_{l} , \text{ and $C$ is a positive constant} \label{col_2:pathproducts_upperb}
\end{align}
where $L_{f}$ denotes set of layers of $f$ and $\sigma_{l}$ denotes the largest eigenvalue of matrix $W^{(l)}$ comprising the weights of the layer $l$ and similarly $\sigma_{\psi}(x)$ denotes the largest eigenvalue of the positive semidefinite outer layer metric tensor $\zeta(x)$.
\end{corollary}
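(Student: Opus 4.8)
The plan is to read the bound off the factorized form of $\zeta(x)$ supplied by Proposition~\ref{theorem:generalization_theorem}, using only submultiplicativity of the spectral norm. Since $\zeta_\psi(x)=\partial_\lambda\partial_\lambda\psi(f(x))$ is the Hessian of the convex cumulant $\psi$ it is positive semidefinite, hence so is the congruence $\zeta(x)=P(x)\,\zeta_\psi(x)\,P^T(x)$, and for a PSD matrix the largest eigenvalue coincides with the operator norm. The congruence-type inequality $\|A B A^{T}\|\le\|A\|^{2}\|B\|$ (which follows from $\|A^{T}\|=\|A\|$ and submultiplicativity) then gives
\begin{align}
    \lambda_{\max}\bigl(\zeta(x)\bigr)\;=\;\|\zeta(x)\|\;\le\;\|P(x)\|^{2}\,\bigl\|\zeta_\psi(x)\bigr\|\;=\;\sigma_\psi(x)\,\|P(x)\|^{2},
\end{align}
so everything reduces to estimating the spectral norm of the path-product matrix $P(x)$.

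The key step is to identify $P(x)$ with (a constant multiple of) the Jacobian $J_f(x)$ of the network at $x$. Indeed the entrywise quantity $p_{i,j}=\sum_{p\in\mathcal{BP}}{}^i_j\pi_p\,L_p(x)$ — a sum over input-to-output paths of the product of weights $\,{}^i_j\pi_p\,$ times the product $L_p(x)$ of activation derivatives encountered along $p$ — is exactly the standard expansion of a chain-rule Jacobian over computation paths; the requirement in Proposition~\ref{theorem:generalization_theorem} that each layer contribute precisely one node to a path is what turns this path sum into an ordinary matrix product. Concretely,
\begin{align}
    P(x)\;=\;W^{(L)}\,D^{(L-1)}(x)\,W^{(L-1)}\cdots D^{(1)}(x)\,W^{(1)},
\end{align}
where $D^{(l)}(x)$ is the diagonal matrix of layer-$l$ activation derivatives evaluated along the forward pass at $x$, and $L_p(x)$ is just the product of the diagonal entries of the $D^{(l)}(x)$ selected by the path $p$. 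Because the activation functions are assumed $1$-Lipschitz, every diagonal entry of $D^{(l)}(x)$ lies in $[-1,1]$, so $\|D^{(l)}(x)\|\le 1$; submultiplicativity of the operator norm then yields $\|P(x)\|\le\bigl(\prod_{l\in L_f}\|W^{(l)}\|\bigr)\bigl(\prod_l\|D^{(l)}(x)\|\bigr)\le\prod_{l\in L_f}\sigma_l$ with $\sigma_l=\|W^{(l)}\|$.

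Substituting this back gives $\lambda_{\max}(\zeta(x))\le\sigma_\psi(x)\prod_{l\in L_f}\sigma_l^{2}$, and the multiplicative constant $C$ absorbs the ``up to a constant'' caveat already carried over from Proposition~\ref{theorem:generalization_theorem} (normalization factors, and bias terms, which shift but do not rescale the path products). I expect the only genuine work to lie in the middle step: carefully verifying the path-sum $=$ matrix-product identity for $P(x)$, i.e.\ matching the combinatorics of the back-propagation path set $\mathcal{BP}$ (one node per layer) and of the functions $L_p(x)$ with the diagonal factors $D^{(l)}(x)$, and being careful with the eigenvalue-versus-singular-value convention for the generally non-symmetric matrices $W^{(l)}$ and $P(x)$ (so that $\sigma_l$ is read as the spectral norm). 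The norm inequalities themselves are entirely routine.
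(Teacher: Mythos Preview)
Your argument is correct and in fact cleaner than the route the paper takes. Both proofs begin by expanding $P(x)$ as the layerwise product $W^{(L)}D^{(L-1)}(x)\cdots D^{(1)}(x)W^{(1)}$ (the paper writes this with the $\odot$ notation $W^{(l)}\odot c'^{(l)}:=W^{(l)}\operatorname{diag}(c'^{(l)})$), and both invoke the $1$-Lipschitz assumption to dispose of the activation factors. Where they diverge is the norm machinery: the paper bounds $\sigma_{\max}(\zeta(x))$ by $\operatorname{tr}(\zeta(x))$, then uses the cyclic trace property together with $\operatorname{tr}(AB)\le\operatorname{tr}(A)\operatorname{tr}(B)$ (for symmetric $A,B$) to split the trace over layers, and finally converts each $\operatorname{tr}({W^{(l)}}^{T}W^{(l)})$ back to $\sigma_l^{2}$ via the lower bound $\operatorname{tr}(A)/\dim(A)\le\sigma_{\max}(A)$. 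This last step is where the paper's constant $C$ actually comes from: it is the product of the layer dimensions picked up when passing from traces back to top eigenvalues. Your use of spectral-norm submultiplicativity bypasses the trace detour entirely and delivers the bound with $C=1$ (up to whatever constant Proposition~\ref{theorem:generalization_theorem} already carries), so your estimate is strictly tighter. The only caveat you already flagged --- that for the generally rectangular $W^{(l)}$ one must read $\sigma_l$ as the top singular value rather than a literal eigenvalue --- applies equally to the paper's version, since $\operatorname{tr}({W^{(l)}}^{T}W^{(l)})$ is a sum of squared singular values.
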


\begin{proof}
By definition of $\zeta(x)$, an equivalence of norms on finite vector spaces and a 1-Lipschitz property of the activation functions by assumption. Full proof is provided in the Supplementaty material to be found in Appendix \ref{sec:appendix_proof}.
\end{proof}

Proposition \ref{theorem:generalization_theorem} and corollary \ref{corollary_3} relate a geometric notion of the generalization as a "flatness" to a parametric space of weights. In particular, it conveys that functions $f$ defined by a model architecture, activation functions, loss function, in line with \cite{kawaguchi2017generalization}, and a particular point in the weight space generalize well at point $x$ if sums of products of the largest eigenvalues of layer weight matrices are small. Despite a tighter bound can be given using traces\footnote{follows also from norm equivalence on finite vector spaces} or sum of weight products along input-output paths, for a development of our method in next section this bound suffices.

\begin{remark}[Towards Global Generalization \label{remark:global_generalization_in_manifold_of_distributions}] 
Notably the upper bound \eqref{col_2:pathproducts_upperb} of Corollary \ref{corollary_3} takes a form of product and the $\prod\limits_{l \in L_{f}} \sigma^2_{l}$ part does not depend on input data $x$. It is given by structure of $f$, i.e. composition of layers, activation functions and weights yet it is "global". If this part is kept low and $\sigma_{\psi}(x)$ is bounded on $\mathcal{D}$\footnote{it is for example in case of a regular model of the output layer and a finite dataset} the largest eigenvalue of $\zeta(x)$ (curvature around $x$) would be small for all $x$. This is the driving idea for the design of \textit{generalization layer} in the next section. 

Strikingly, it follows that under flatness constraint on every distribution indexed by $s$, overfitting, i.e., reaching zero training error, is not a problem as it could be in the case of no such regularizer. On contrary under imposed constraints on "flatness" of the pulled back bilinear form as introduced above (Proposition \ref{theorem:generalization_theorem} and \ref{app:corollary_3}) reaching zero training error improves accuracy and is demanded. On the other hand, reaching it requires necessary capacity e.g. depth of the model. 

In the extreme case of shallow 1 hidden layer feed-forward network, i.e. an input layer is followed by an output layer $\prod\limits_{l \in L_{f}} \sigma^2_{l} \approx 0$ means the networks $f$ is close to a constant function no matter how wide it is. That is it has extremely limited capacity. Notably going deeper $\prod\limits_{l \in L_{f}} \sigma^2_{l} \approx 0$ can be achieved by keeping eigenvalues of some layers small and that enables the remaining layers to be more expressive because they act in the product with previous layers. The same way the generalization layer is designed and experimentally verified to work in the following sections. 

Hence the depth enables both to reach zero training error by increasing the capacity of the network as well as to keep all pulled back metrics around training data flat, which is generalize well. In the explicit experiment (motivated differently though) it is demonstrated in \cite{hauser2018principles}, Fig.4.7. 

Nevertheless, the learning in the manifold of distributions suggests the explanation of a long puzzling phenomenon: the extreme accuracy and generalization the deep neural networks are able to reach at the same time. Moreover, it renders depth being the essential enabler of this capability.
\end{remark}

\subsection{Generalization Layer (GL) with Skip Connections} \label{sec:generalization_layer}
Corollary \ref{corollary_3} of the previous section suggests that good generalization can be achieved by keeping products of maximum eigenvalues of layers "small". In this section we design one such widely applicable method as later on shown in experiment section \ref{sec:experiments}.

Idea is to add an additional structure into the existing architecture as depicted in Fig.\ref{fig:gen_layer} with hooks to control size of weights (blue lines in Fig.\ref{fig:gen_layer}) during the training. 

This additional structure called \textit{generalization layer} is defined by the tuple: (nodes ($\bm{g}$), weights ($W_g$) and skip connection ($\nu W_s$) parametrized by a scalar $\nu$). Number of new nodes $\bm{g}$ is the same as $\bm{x}(l)$ to match dimensionality of weight matrix of layer $l$. Similarly skip connections $W_s$ are chosen to match the dimensionality of $x^{(l+1)}$. In the simplifying diagram Fig.\ref{fig:gen_layer} skip connections are of the form ($\nu I$), where $I$ represents the identify matrix because $x^{l}$ and $x^{(l+1)}$ have the same number of nodes. In general case however, when $x^{l}$ and $x^{(l+1)}$ are of different dimensionality, $W_s$ defines a linear projection matrix in line with notation used in \cite{he2016deep}. 

\begin{figure*}[tbh]
    \centering
\begin{subfigure}[t]{.45\linewidth}
\centering
  \includegraphics[width=.54\linewidth]{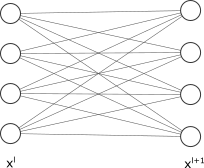}
  \caption{before}
\end{subfigure}
\begin{subfigure}[t]{.45\linewidth}
\centering
  \includegraphics[width=.95\linewidth]{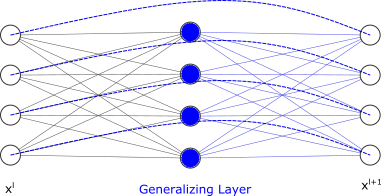}
    \caption{after}
\end{subfigure}
\caption{\textbf{Generalization layer:} Depicting the insertion of the "generalization layer" (blue) in (b) between layer $(l)$ and $(l+1)$. This structure comprises additional (blue) nodes of the same size as $x^{l}$, additional weights (blue lines) and skip connections (dashed blue lines) ensuring gradient flow is not broken into the original architecture (a).}
\label{fig:gen_layer}
\end{figure*}

This construction straightforwardly generalize to convolutional or any other architecture.

\textbf{Keeping the weight products under control} To deliver a desired regularizing effect it is necessary to prevent newly added weights $W_g$ from growth while ensuring the rest of the network is trained. This could be achieved by many means.  

We have opted for training with 10 times lower learning rate applied $W_g$ compared to the rest of the network in addition to steering large gradients towards the lower layer through shortcuts $W_s$ controlled by a hyperparameter $\nu$. In particular constraining the weight of the generalization layer dampens the backpropagated gradients during training and may stall or stop training whatsoever. To avoid breaking the gradient flow the skip connections are parametrized by scalar $\nu \in (0.1, 1)$ and this hyperparameter is linearly decayed during the training. For details see section \ref{sec:training_gen_layer}.

The method just described delivers the desired effect and the resulting model performs on par or better than original ResNet models \cite{he2016deep} despite in our design we took out the whole second residual block of stack convolutional layers in ResNet and replaced it by one FF layer as in Fig.\ref{fig:gen_layer} and thus our model has significantly less parameters. Results are presented in Experimental section \ref{sec:experiments}

\begin{figure*}
    \centering
\begin{subfigure}[t]{.23\linewidth}
\centering
    \includegraphics[width=.95\linewidth]{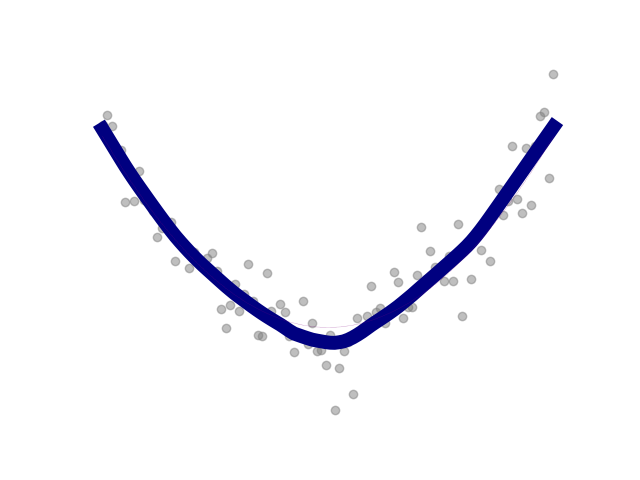}
    \caption{epoch 2000}
\end{subfigure}
\begin{subfigure}[t]{.23\linewidth}
\centering
  \includegraphics[width=.95\linewidth]{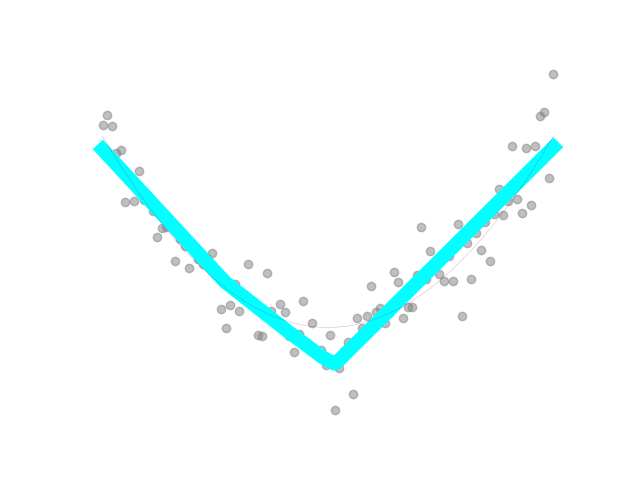}
   \caption{epoch 2000}
\end{subfigure}
\begin{subfigure}[t]{.23\linewidth}
\centering
    \includegraphics[width=.95\linewidth]{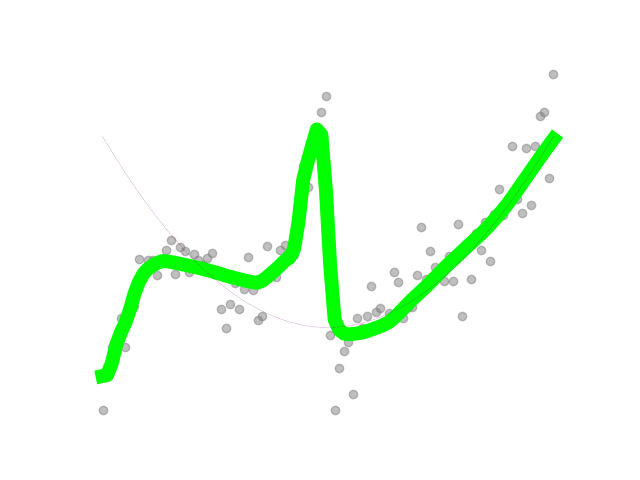}
    \caption{epoch 500}
\end{subfigure}
\begin{subfigure}[t]{.23\linewidth}
\centering
  \includegraphics[width=.95\linewidth]{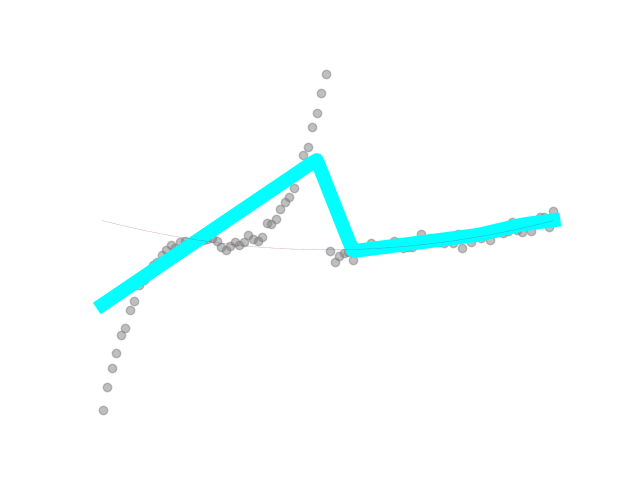}
   \caption{epoch 500}
\end{subfigure}

\begin{subfigure}[t]{.23\linewidth}
\centering
    \includegraphics[width=.95\linewidth]{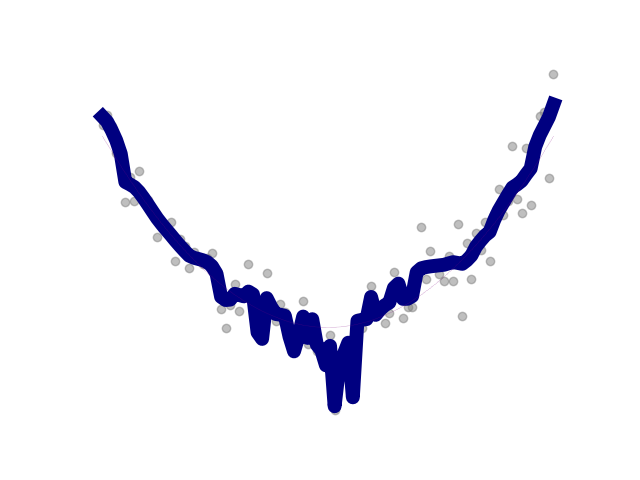}
    \caption{epoch 20,000}
\end{subfigure}
\begin{subfigure}[t]{.23\linewidth}
\centering
  \includegraphics[width=.95\linewidth]{img/resnet_longshortcuts/RESNET_NOISE_10_1e5_LONGSKIP_epoch_2000_train_data.png}
   \caption{epoch 20,000}
\end{subfigure}
\begin{subfigure}[t]{.23\linewidth}
\centering
    \includegraphics[width=.95\linewidth]{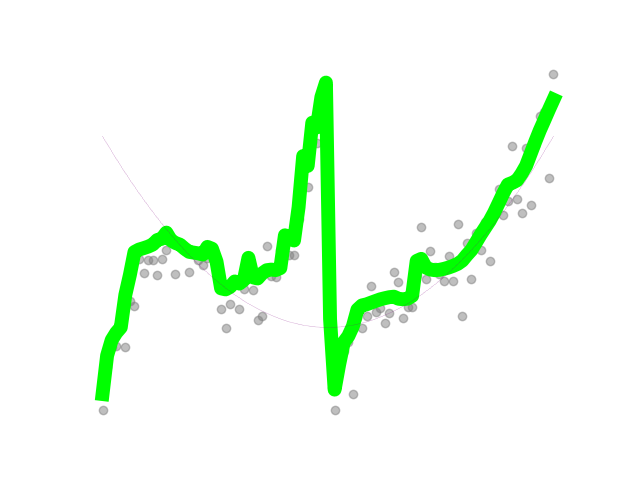}
    \caption{epoch 20,000}
\end{subfigure}
\begin{subfigure}[t]{.23\linewidth}
\centering
  \includegraphics[width=.95\linewidth]{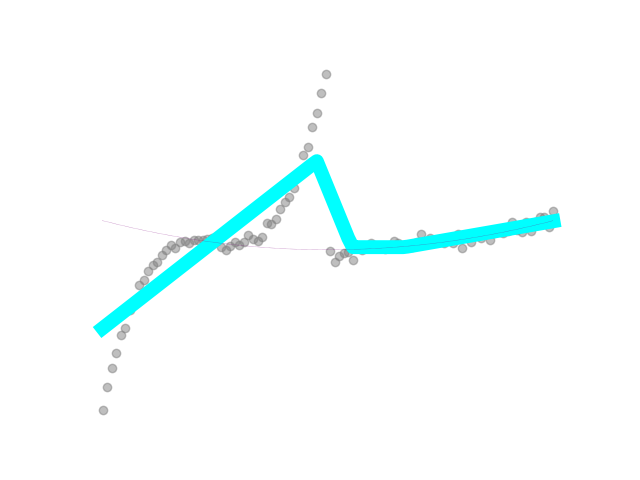}
   \caption{epoch 20,000}
\end{subfigure}

\caption{\textbf{Regularization by skip connections:} Least square regression of noisy quadratic function in the first two columns and mixture of quadratic and cubic function in the two rightmost columns. Presented are selected epochs during SGD training of the residual feed-forward architecture with ReLu activations, 119 layers and 36,160 trainable weights regularized by (long) skip connections over every $7^{th}$ hidden layer as described in Experimental setup \ref{sec:experimental_setup}. While the (cyan) model in the second and the fourth column is trained by vanilla SGD, the first column (purle) and third column (green) is the result of SGD training using the adaptive learning rate (Adam). Figure demonstrates that vanilla SGD in combination with long skips regularizer effectively enforces piecewise linear structure on fitted function from early epochs (b) and (d) to over 20 000 epochs in both cases, quadratic function (f) as well as in case of more complex the third degree noisy polynom in (h). This is in stark contrast to an adaptive training by Adam (not reported but similar results were produced by RMSProp as well) that explores flatter directions in loss landscape \cite{goodfellow2016deep} and recovers almost exact generating function, as shown in (a) and (c), however if there is no early stopping applied it eventually overfits, in (e) and (g), despite the regularization of skip connections (same as in vanilla SGD case). A training data consists of randomly generated values of $g(x)$ on 100 data point grid on the interval (-8,8) (gray), test data similarly on interval (-2,2).}
\label{fig1:regularization_by_longskips}
\end{figure*}

\subsection{Generalization Layer with DropOut} \label{sec:gld}

Previous section derived one particular method called Generalization Layer (GL), making use of the shortcut connections was designed to allow experimental verification of the theory. However, as noted therein the GL is not "the only" method. On the contrary we believe many are to be discovered.

In this section, we design another easy-to-implement generalization layer that works well in practice as will shortly be demonstrated in experiments with the CIFAR10 dataset, see \ref{sec:experiments_gld}.

Same as before in the case of GL the idea is to add structure into the existing architecture as depicted in Fig.\ref{fig:gld_design} with hooks to control the size of weights (blue lines in Fig.\ref{fig:gld_design}) during the training. 

This additional structure labeled "GLD" is defined by the tuple: (nodes $\bm{g}$, weights ($W_g$) and a hyperparameter 
$p \in [0,1]$). Dropout hyperparameter $p$ defines a probability of the success in Bernoulli trials of removing the node of the GLD layer (only) from the computational graph for one particular batch. Applied independently on all nodes of the GLD layer. Importantly a \textbf{dropout} is applied only on the GLD layer and independently on other regularization techniques used for training, incl. dropout.

The number of new nodes $\bm{g}$ is the same as $\bm{x}(l)$ to match the dimensionality of the weight matrix of layer $l$. In the simplifying diagram Fig.\ref{fig:gld_design} $x^{l}$ and $x^{(l+1)}$ have the same number of nodes. In general case however, when $x^{l}$ and $x^{(l+1)}$ are of different dimensionalities $W_g$, i.e. number of outgoing links, colorcoded blue in Fig.\ref{fig:gld_design}, is adjusted accordingly. 

Once embedded into an architecture of the desired model this extended model is trained by standard (stochastic) gradient descent (backpropagation) techniques with a caveat of an additional application of "GLD" dropout on this layer.

\begin{figure*}[tbh]
    \centering
\begin{subfigure}[t]{.45\linewidth}
\centering
  \includegraphics[width=.55\linewidth]{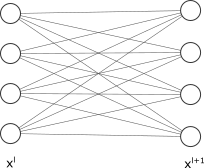}
  \caption{before embedding GLD}
\end{subfigure}
\begin{subfigure}[t]{.5\linewidth}
\centering
  \includegraphics[width=.95\linewidth]{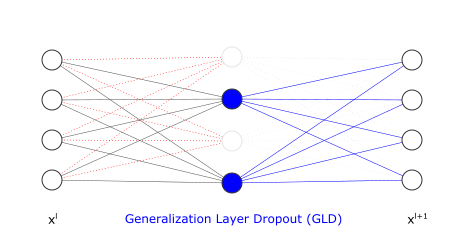}
    \caption{the extended architecture after}
\end{subfigure}
\caption{\textbf{Generalization Layer with Dropout (GLD):} Depicting the insertion of the "generalization layer" (blue) in (b) between layer $(l)$ and $(l+1)$. This structure comprises additional (blue) nodes of the same size as $x^{l}$, additional weights (blue lines) and probability of dropping added nodes of GLD layer from training (forward-backward) step. For instructive purposes a figures simulate the case when a half of the GLD nodes were ruled out for an step.}
\label{fig:gld_design}
\end{figure*}

This construction straightforwardly generalizes to convolutional or any other architecture and includes any additional structure that fits incoming and outgoing dimensionality. 

On the other hand, because Corollary \ref{corollary_3} is based on the upper bound that may be loose, there are differences in effectiveness across different versions of \textit{generalization layer}, as shown in Experiments i.e. GL vs. GLD in this paper. We believe many other forms of "generalization layer(s)" are to be explored as Proposition 3.1. only requires to control (ideally but not necessarily) all backpropagation paths $\mathcal{BP}$ no matter architecture or the way it is delivered.
\section{Experiments} \label{sec:experiments}
Our experiments include both feed forward and convolutional architectures and show the following:
\begin{itemize}
    \item Fig.\ref{fig:overfitting} demonstrates on the example that vanilla SGD alone eventually converges to a highly complex function that overfits training data in accordance to capacity of the model and suggests that an implicit regularization of SGD is not enough to achieve good generalization on its own.
    \item Fig.\ref{fig1:regularization_by_longskips} and Fig.\ref{fig:power_of_gl} report that it is possible to keep model in a "simple function" mode over long training periods by manipulating an architecture of the models. In this case we use additional skip connections over 7 layers to regularize as outlined in \cite{zhang2019towards}. It is of utmost importance to state that short skip connections without use of batch normalization readily overfit as demonstrated in Fig.\ref{fig:overfitting}.
    \item Other hereby not reported experiments we ran suggest the regularization by GL if overdone may keep a map $f$ "too simple" preventing SGD from improving training error for long periods. Further research on $\nu$ hyperparameter optimization an on alternative training regimes e.g. cyclical learning rates \cite{smith2017cyclical} is suggested.
    \item On classification task on the CIFAR10 Fig.\ref{fig2:regularization_by_longskips} demonstrate that GL replacing the middle stack of resnet blocks, see \cite{he2016deep}, and still achieving on-par or better performance while using less parameters.
    \item Further Fig.\ref{fig:gld_performance_1} and Fig.\ref{fig:gld_performance_thebestof} report experiments with another version of generalization layer with dropout instead of skip connections, this time placed beyond the encoder block of ResNet. Outstanding results of this encoding suggest that placement of the generalization layer within the mode architecture plays an essential role, see discussion.
    \item Table\ref{tab:ResNet_comparison} reports on popular ResNet architectures and the CIFAR10 image classification dataset, the 56 layers ResNet model enhanced by GLD structural regularizer outperformed all other original ResNets, \cite{he2016deep}, including the one with 1202 layers.
\end{itemize}

\subsection{Experimental Setup and "Generalization Layer" Training Regime} \label{sec:experimental_setup}
In experiments reported in Fig.\ref{fig:overfitting} and Fig.\ref{fig1:regularization_by_longskips} we use the least square regression of two noisy polynomial functions:
\begin{itemize}
    \item $g(x)=10 + 1.22x^2 + \epsilon, \epsilon \sim N(0,10)$
    \item $h(x)=10 + 1.22x^2 + 1.22(x+4)^3 + \epsilon, x \in (-\infty, 0) \epsilon \sim N(0,10)$ and $h(x)=g(x)$ otherwise (on $x \in (0,\infty)$).
\end{itemize}

For regression on $g(x)$ in Fig.\ref{fig:overfitting} a feed forward architecture of varying depth is used in counter example of SGD bias towards simple functions. All models are of feed forward (FF) architecture. The deepest (labeled "Res Net w/o BN" in the figure) has 119 hidden linear layers $17$ neurons each and one dimensional input and output so that number of trainable parameters (36,160) is as close as possible to the 7 hidden layers (Deep) model with 35,989 parameters and 1 hidden layer (Shallow) model with 35,998 trainable parameters for comparison purposes. All models use ReLu activations and are trained by SGD with a constant learning rate of $\eta=10^{-5}$ and no regularization (referred to as "vanilla SGD") unless stated otherwise, over 20,000 epochs. 
Synthetic training dataset for regression consists of randomly generated data points of $g(x)$ and $h(x)$ in (Fig.\ref{fig1:regularization_by_longskips}) on 100 data points grid on the interval (-8,8) reported, test data similarly on interval (-2,2).

In Fig.\ref{fig1:regularization_by_longskips} identity skip connections a.k.a. "shortcuts", see \cite{he2016deep, zhang2019towards}, are used to explore their regularization capability. In particular we use skips to shortcut every $7^{th}$ feed forward layer by identity and we refer to this model in Fig.\ref{fig1:regularization_by_longskips} as regularized by skip connections. We train the model with both vanilla SGD and Adam to showcase the affect of training with adaptive learning rate (Adam) with results elaborated on in the caption.

Fig.\ref{fig:power_of_gl} reports on the functional fit of $h(x)$ experiment by making use of multiple\footnote{3 blocks of FFL-ReLu-FFL with overarching $\nu$-weighted shortcut over each. Altogether has 90 hidden layers with 45 skip connections (out of which 3 skips belong to GL). Width of the layer is 17 units.} "GL" layers it is possible to recover the generating cubic function the way adaptive learning rate (Adam) SGD did (the best MSE achieved in the experiments), see Fig.4(a) and Fig.4(c). We explicitly note that results of Fig.\ref{fig:power_of_gl} have been achieved with GL as the only explicit regularizer used, i.e. no batch normalization, weight decay or else, using the training method described in Section 3 of the paper.

\subsubsection{Training with a "Generalization Layer"}\label{sec:training_gen_layer}
Experimental design for Fig.\ref{fig1:regularization_by_longskips} uses "regularizing layer" as designed in \ref{sec:generalization_layer} together with a scheduled weight decay applied on the skip connection parameter over the course of training. This regime is necessary to ensure gradient flow is not disconnected by "generalization" layer whose weights we'd like to keep low. To propagate gradient beyond this layer skip connections are used. Their strength $\nu$ is toned down during the training starting from $\nu=1$ and linearly decayed down to $\nu=0.1$ for the last 20\% of the training period\footnote{bringing $\nu=0$ is not wanted as it would disconnect smoothness of coordinate transformations over layers, as noted in \cite{zhang2019towards,hauser2018principles}}.

On top of previous 10 times lower learning rates of generalization layer weights were used after initial 20 epochs to slow down the growth of the largest eigenvalue of this layer in line with \ref{corollary_3}.

All experiments were designed and coded in PyTorch \cite{paszke2017automatic} and executed on regular 10 GPU cluster.

\subsection{Results of Experiments}
\subsubsection*{Experiments on Noisy Polynomial Functions}
Most of the results are elaborated on in the captions of figures Fig. \ref{fig:overfitting} and Fig.\ref{fig1:regularization_by_longskips}. In relation to "generalization by design" method specifically Fig.\ref{fig1:regularization_by_longskips} demonstrates a usefull regularization effect of skip connections, as covered in \cite{zhang2019towards}. As described in the caption of the figure shortcuts in connection with vanilla SGD training produces piece-wise linear fuction even after 20000 epochs. However if used in an adaptive learning rate SGD training regime it recovers generating functions perfectly yet without early stopping it continues to lower training error and overfit eventualy.

To shed more light on regularization effect of skip connection it is important to compare Fig.\ref{fig:overfitting} to \ref{fig1:regularization_by_longskips}. Fig.\ref{fig:overfitting} shows that ResNet architecture with short skips every second layer trained by vanilla SGD, i.e., without batch normalization\footnote{that besides reducing a covariate shift also regularizes, see \cite{ioffe2015batch}}, (labeled "Res Net w/o BN" in the figure) tends to heavily overfit from an early stage of the training. This is taken into account when designing "regularizing layer", where skips connections are used to steer gradient flow away from "bottleneck" layer in early stage of training rather than for their regularizing effect itself. See section \ref{sec:generalization_layer} for details.

\subsubsection*{Skip Connections GL On the CIFAR10 Experiments}
Experiments on CIFAR10 dataset are reported to demonstrate applicability and effectiveness of a "regularization layer" from previous section \ref{sec:generalization_layer} on the real world dataset and popular ResNet architecture. For results see caption of Fig.\ref{fig2:regularization_by_longskips}.
\begin{figure*}[htb]
    \centering
    \begin{subfigure}[t]{.85\linewidth}
    \centering
      \includegraphics[width=0.95\linewidth]{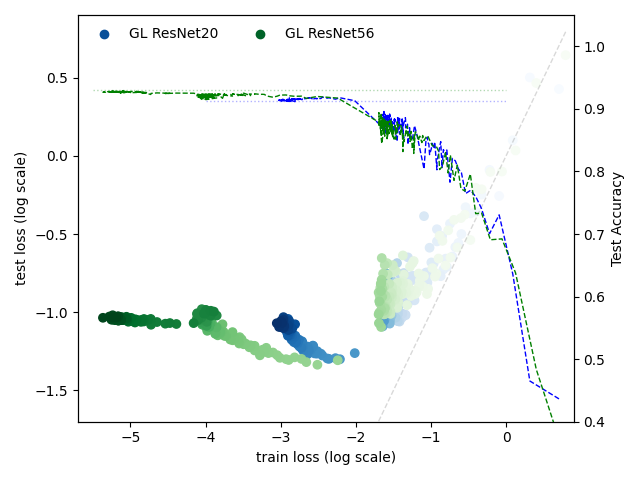}
    \end{subfigure}
\caption{\textbf{Generalizing layer (GL) on CIFAR10:} Both architectures GL ResNet20 (blue) and GL ResNet56 (green) based on original ResNet architectures referenced in the labels have fewer parameters than original models because they replaced the whole second ResNet convolutional block by one generalizing layer with skip connections defined in \ref{sec:generalization_layer}. Yet they reach on-par or better results than reported in the original paper \cite{he2016deep} - color-coded dotted lines indicate reference accuracy levels reached therein. A color-coding of epochs goes from lighter early ones to darker later ones.}
\label{fig2:regularization_by_longskips}
\end{figure*}

\begin{figure*}[htb!]
    \centering
\begin{subfigure}[t]{.3\linewidth}
\centering
  \includegraphics[width=.95\linewidth]{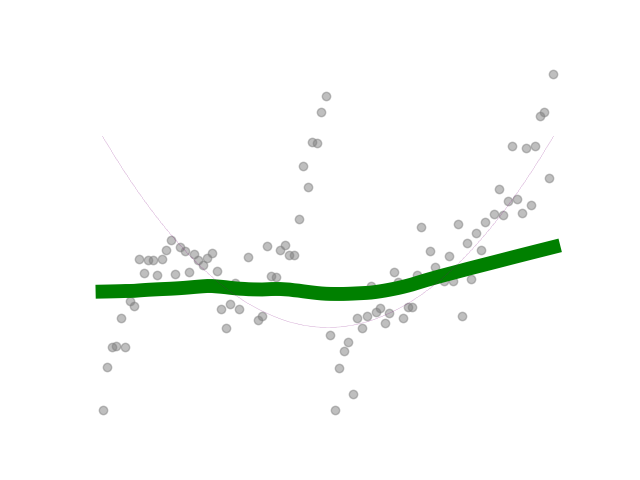}
   \caption{epoch 1}
\end{subfigure}
\begin{subfigure}[t]{.3\linewidth}
\centering
    \includegraphics[width=.95\linewidth]{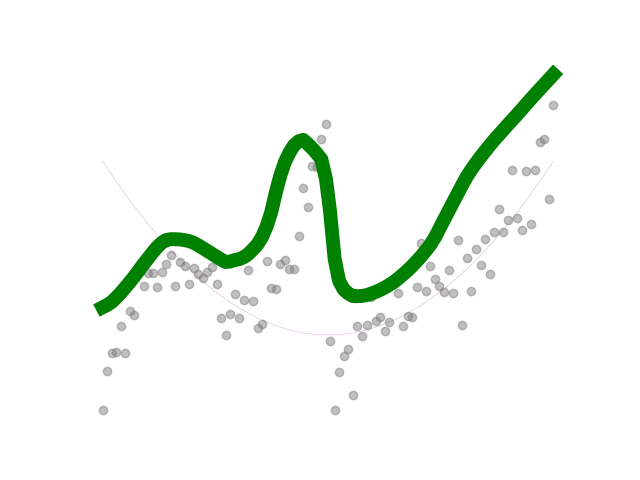}
    \caption{epoch 30}
\end{subfigure}
\begin{subfigure}[t]{.3\linewidth}
\centering
    \includegraphics[width=.95\linewidth]{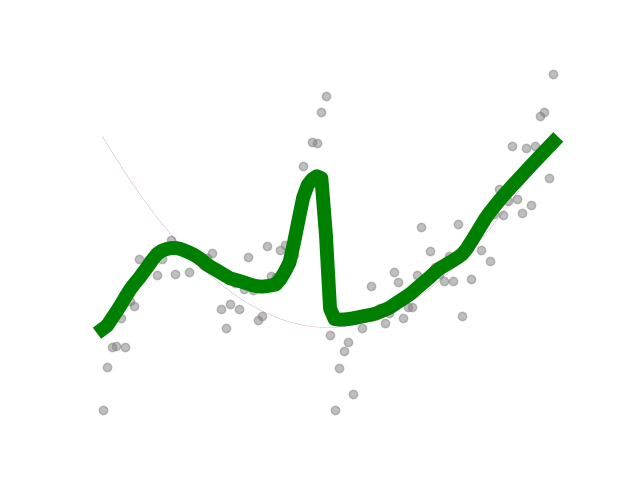}
    \caption{epoch 50}
\end{subfigure}
\begin{subfigure}[t]{.3\linewidth}
\centering
  \includegraphics[width=.95\linewidth]{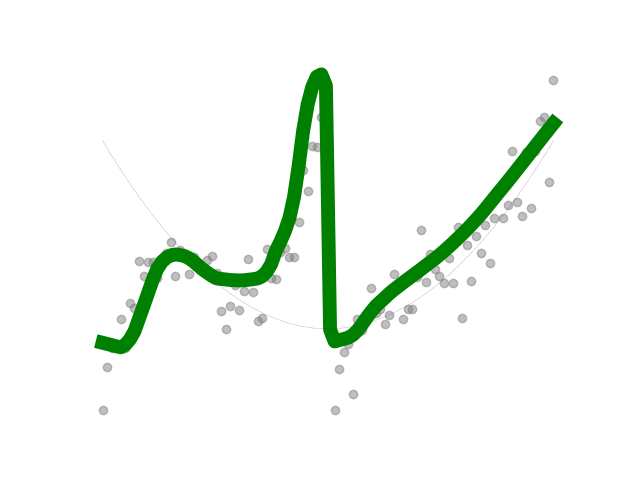}
   \caption{epoch 100}
\end{subfigure}
\begin{subfigure}[t]{.3\linewidth}
\centering
    \includegraphics[width=.95\linewidth]{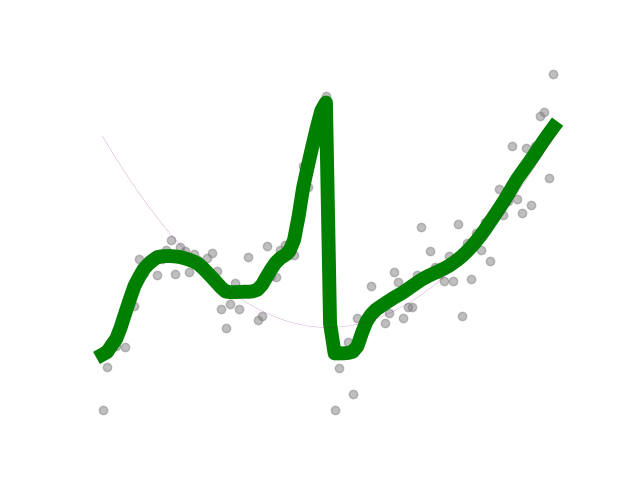}
    \caption{epoch 200}
\end{subfigure}
\begin{subfigure}[t]{.3\linewidth}
\centering
    \includegraphics[width=.95\linewidth]{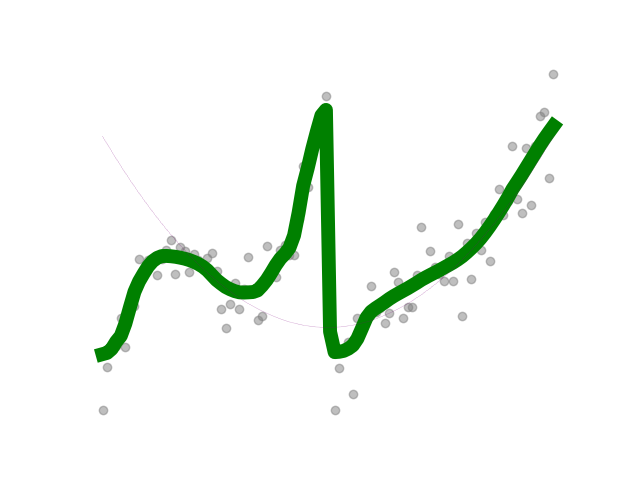}
    \caption{epoch 300}
\end{subfigure}
\begin{subfigure}[t]{.3\linewidth}
\centering
  \includegraphics[width=.95\linewidth]{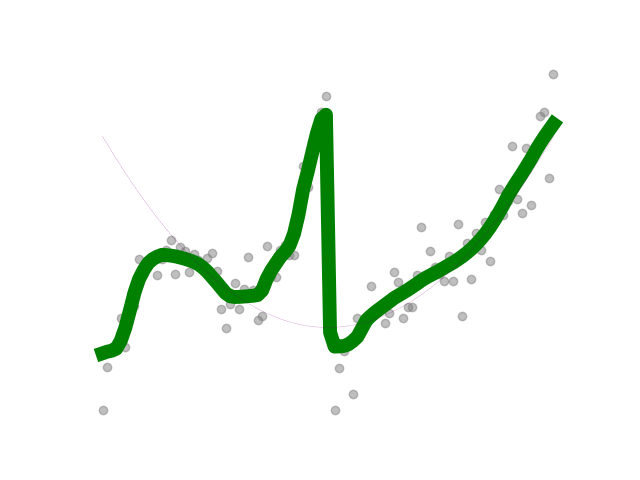}
   \caption{epoch 400}
\end{subfigure}
\begin{subfigure}[t]{.3\linewidth}
\centering
    \includegraphics[width=.95\linewidth]{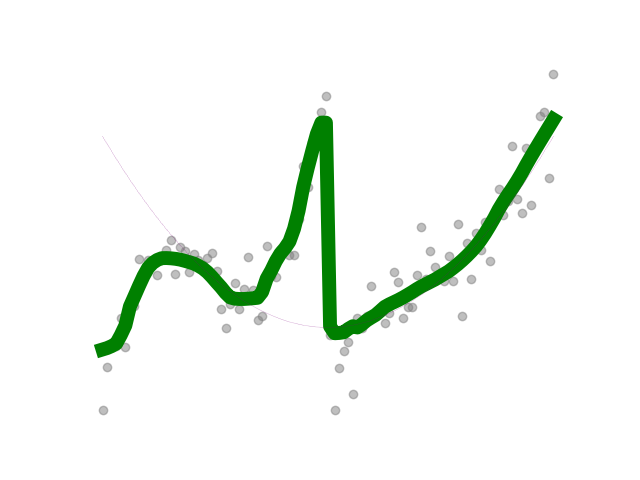}
    \caption{epoch 500}
\end{subfigure}
\begin{subfigure}[t]{.3\linewidth}
\centering
    \includegraphics[width=.95\linewidth]{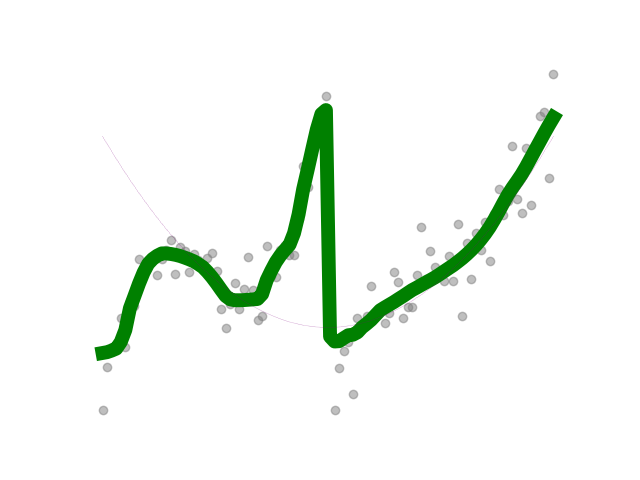}
    \caption{epoch 1,000}
\end{subfigure}
\caption{\textbf{Generalization layer (GL) as the only regularizer} Fitting a noisy cubic function: $h(x)=10 + 1.22x^2 + 1.22(x+4)^3 + \epsilon, x \in (-\infty, 0)$ and $h(x)=10 + 1.22x^2 + \epsilon$ otherwise (on $x \in (0,\infty)$), where $\epsilon \sim N(0,10)$. As shown in the snapshots from training (a)-(i) a generating function is recovered comparably to Fig4.(c) result of adaptive learning rate (Adam).
The only explicit regularizer used is the "generalization layer" (GL) of 3 blocks of FFL-ReLu-FFL with a $\nu$ weighted shortcut overarching every one of the three blocks as described in the paper Section 3.
The model consists of 90 hidden feed-forward layers (FFL) of 17 nodes wide with 45 skip connections (out of which three skips belong to GL weighted by hyperparameter $\nu$ and the rest are identities). The models were trained by vanilla SGD for over 1,000 epochs with a learning rate of $\eta=10^{-7}$ decayed every 200 epochs by 0.1. Further, GL was trained according to the regime from Section 3 with a learning rate $0.1\eta$ and $\nu$ linearly decayed to value 0.1 at epoch 500 and stayed on. A training data consists of randomly generated values of $h(x)$ on 100 data point grid on the interval (-8,8) (gray)}
\label{fig:power_of_gl}
\end{figure*}

The results of this alternative implementation demonstrate that improved generalization is achieved irrespective of the method of implementing Proposition 3.1. and its Corollary and as such is not a consequence of the method but rather of the concept presented.

\subsubsection*{Drop-out Generalization Layer (GLD) On the CIFAR10 Experiments} \label{sec:experiments_gld}
Experiments on the CIFAR10 dataset demonstrate applicability and effectiveness of a "regularization layer" from previous section \ref{sec:gld} on the real world dataset and popular ResNet architecture. For results see caption of Fig.\ref{fig:gld_performance_thebestof}.
\begin{figure*}[htb]
    \centering
    \begin{subfigure}[t]{.85\linewidth}
    \centering
      \includegraphics[width=0.95\linewidth]{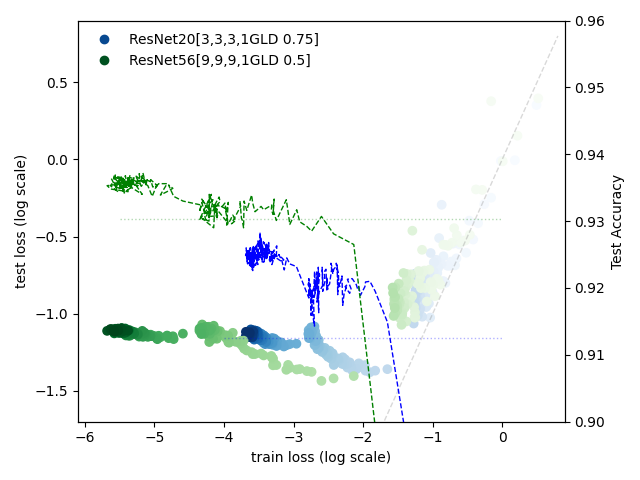}
    \end{subfigure}
\caption{\textbf{Structural generalization layer with drop-out (GLD) on CIFAR10:} Both architectures ResNet20 (blue) and ResNet56 (green) based on original ResNet architectures referenced in the labels have an additional GLD layer between encoder and decoder part of the architecture. For this additional cost they outperformed original ResNet models by quite a margin (see Table\ref{tab:ResNet_comparison} for details) compared to the original paper \cite{he2016deep} - color-coded dotted lines indicate the reference accuracy levels reached therein. A color-coding of epochs goes from lighter early epochs to darker later ones.}
\label{fig:gld_performance_1}
\end{figure*}

The best results of the tested models based on ResNet architecture with additional GLD layers are depicted in Fig.\ref{fig:gld_performance_thebestof}.

\begin{figure*}[htb]
    \centering
    \begin{subfigure}[t]{.85\linewidth}
    \centering
      \includegraphics[width=0.95\linewidth]{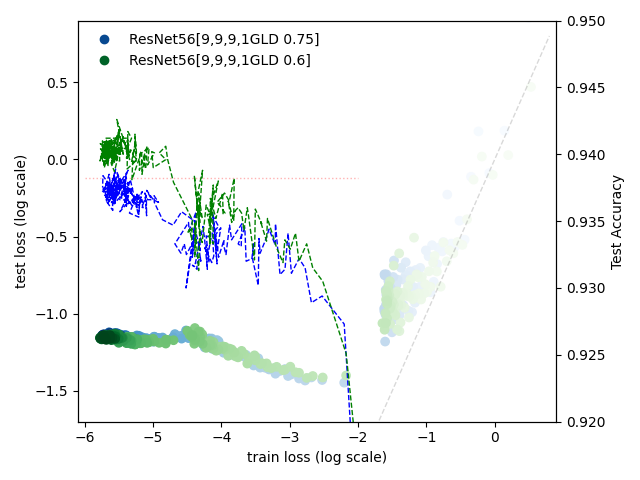}
    \end{subfigure}
\caption{\textbf{Hyperparameter adjusted structural generalization layer with drop-out (GLD) on CIFAR10:} This figure presents the best results achieved in the experiments on CIFAR10 and ResNets. The ResNet56 with GLD layer and a dropout rate of 0.6 outperformed all models from original ResNet paper \cite{he2016deep} including the deepest ResNet1202 model (reference line in red), see Table \ref{tab:ResNet_comparison} for details. The color- coding used is the same as in Fig. \ref{fig:gld_performance_1}}
\label{fig:gld_performance_thebestof}
\end{figure*}

Summarized in the Table\ref{tab:ResNet_comparison} overall results of ResNet architectures of varying depth on the CIFAR10 dataset are presented for comparison. Achieved test errors of the original paper in $4^{th}$ column and implementation of the same by \cite{Idelbayev18a} in the last column are compared to models using \textit{generalization layer} - the dropout variant, denoted GLD. The table is sorted according to the achieved test errors (the last column) in descending order. The rows in bold demonstrate results of models with generalization layer based on ResNet20 and ResNet56, in the $3^{rd}$ and two last rows respectively. As the last columns show ResNet20 GLD outperformed larger model ResNet32 and ResNet56 GLD gained the best results of all the models including the ResNet1202 with 1202 layers and 19.4 million parameters.

\begin{table}[]
    \centering
    \begin{tabular}{cccccc}\toprule
         &  Name & Layers & Params & Test err (orig.) & Test err $\downarrow$\\ \midrule
         & ResNet20	& 20&	0.27M&	8.75\%&	8.27\% \\
         & ResNet32	& 32&	0.46M&	7.51\%&	7.37\% \\ \midrule
         & \textbf{ResNet20 GLD 0.75} & \textbf{22}&\textbf{0.3M}&	--\%&\textbf{7.17\%} \\ \midrule
         & ResNet44	& 44&	0.66M&	7.17\%&	6.90\% \\
         & ResNet56	& 56&	0.85M&	6.97\%&	6.61\% \\
         & ResNet110& 110&	1.7M&	6.43\%&	6.32\% \\
         & ResNet1202& 1202&19.4M&	7.93\%&	6.18\% \\ \midrule
         & \textbf{ResNet56 GLD 0.5} & \textbf{58}&\textbf{0.93M}&	--\%&	\textbf{6.11\%} \\ 
         & \textbf{ResNet56 GLD 0.6} & \textbf{58}&\textbf{0.93M}&	--\%&	\textbf{5.74\%} \\ \bottomrule
         \end{tabular}
\caption{ResNet with and without \textbf{Generalization Layer with Drop-Out (GLD)}. GLD is placed after the encoder block of ResNet as opposed to GD with Skip connection experiments earlier, testing the effect of an invariance principle to arbitrary coordinate representations outlined in the  Discussion section. Outstanding results of GLD presented here are in support of placement of the generalization layer beyond the layers that are supposed to generalize well, e.g, encoder block, in line with this invariance principle.}
\label{tab:ResNet_comparison}
\end{table}
\section{Discussion and Future Work} \label{appsec:remarks}
As argued in \cite{zhang2016understanding} the norm of weights does not necessary captures good generalization. They show in particular that generalization in ReLu networks is invariant along hyper planes corresponding to reciprocal rescaling in and out side of a nonlinearity by some constant $\beta$ and $\frac{1}{\beta}$ respectively. If $\beta$ is absorbed into weights a norm along such hyperplane gets arbitrarily large despite it represents the same function and thus has the same generalization properties. As can be seen such a rescaling has no effect on a bound \eqref{col_2:pathproducts_upperb} of Corollary \ref{corollary_3} because $\beta$'s would cancel out along the path products involved. Regularizing a path products as in our method is more subtle than regularizing norms as follows from geometric vs. arithmetic mean or more general Jensen's inequality. More over $f$ with regularized spectral products as in Proposition \eqref{theorem:generalization_theorem} share many properties with low spectral rank used to characterize simple functions in \cite{huh2021low}. 

On the other hand, because Corollary \ref{corollary_3} is based on the upper bound that may be loose, there are differences in effectiveness across different versions of \textit{generalization layer}, as shown in Experiments i.e. GL vs. GLD in this paper. We believe many other forms of "generalization layer(s)" are to be explored as Proposition 3.1. only requires to control (ideally but not necessarily) all backpropagation paths $\mathcal{BP}$ no matter architecture or the way it is done so. Also, in experiments with dropout variant of generalization layer (GLD), varying results of different drop-out hyperparameter presented in Table \ref{tab:ResNet_comparison} suggest that a hyperparameter optimization may bring about even further improvements. All these suggestions are left for future work.

\textbf{On the depth and the width of the model}
The \textit{Generalization layer} keeps the data independent part of the the upper bound \eqref{col_2:pathproducts_upperb} in the Corollary \ref{corollary_3} small. Nevertheless the $\sigma_{\psi}(x)$ part, defined as the largest eigenvalue of the outer layer inner product of the tangent space at $TF(f(\bm{x}))$, plays its role too. The Corollary shows that the "generalization" benefits from flatter FIM, i.e. small $\sigma_{\psi}(x)$. We argue this is easier met by a high capacity model that is capable to reach more optima, because of its flexibility, and converge to a "good" one. Same as Remark [Towards Global Generalization] in section \ref{remark:global_generalization_in_manifold_of_distributions} this also suggests the generalization is conditioned by model's capacity. In particular, that means the model has to be deep enough because eigenvalues of layers are kept low by \ref{corollary_3} enforcing generalization and limiting a capacity due to the width. Proving this conjecture is left as future work.

In addition, Proposition 3.1 also provides a view on the role of the depth of the network with regards to a generalization seen as smoothness of the transformation $f$. An element $L_p(x)$ is a product of pointwise derivatives of 1-Lipschitz activation functions along the path. Since the common activation functions like ReLu, $tanh$, and their variants are 1-Lipschitz, having a pointwise derivative in the range $[0,1]$, the deeper the network the smaller $L_p(x)$ gets. Hence the 'simpler' and better generalizing $f$ is obtained.

As opposed to the depth of the model the effect of a width of layers is more involved. On one hand, it contributes with more paths to the sum in Eq.\ref{col_2:pathproducts} on the other hand weights, if initialized randomly in a common way, e.g. 'He' or 'Xavier', \cite{he2015delving, glorot2010understanding}, have zero mean and variance that corrects for a number of units. So the path products involved are of both signs\footnote{despite the overall Eq.\ref{eq:prop31_zeta} is always non-negative due to the positive semi-definiteness of $\partial_{\kappa}\partial_{\lambda}\psi(f(\bm{x}))$} and therefore the sum of the products is not guaranteed to grow over the limit even long after the initialization. Moreover, the width contributes to the capacity of the network, which is essential for generalization as argued above.

\subsubsection*{Popular regularizers in the light of Corollary 3.0.1} \label{appsec:common_regularizers}
As noted in the Introduction of the paper there are many regularizers at hand to be combined with SGD training and that works provably and empirically well. Next we relate most common regularization techniques to our results and show that they are in line and supportive each other.

\textbf{Weight decay} \cite{goodfellow2016deep} As outlined in the Discussion section in the main body of the paper $l_2$ norm regularization of weights could be linked to the upper bound Eq.\eqref{col_2:pathproducts_upperb} by trace and operator norm inequality (see Supplementary Material) on layer weight matrices, i.e. $tr({W^{(l)}}^T W^{(l)})$ from the proof of Corollary \ref{app:corollary_3}. Thus keeping $l_2$ norm of weights small keeps the upper bound of the layer's largest eigenvalue small and hence contributes to smaller Eq.\eqref{col_2:pathproducts_upperb}. Note however that this bound is rather loose in general and secondly, as noted earlier a weight decay is scale dependent and as such may rule out the optima of a large $l_2$ norm that generalize well according to our results and in line with \cite{zhang2016understanding}.
Nevertheless a stratified or selective weight decay applied only on the "generalization layer" may be just another way of keeping path or eigenvalue products low and therefore beneficial for generalization of the model. We leave this and other alternatives of the "generalization layer" design for a future work as well as cyclical learning rate \cite{smith2017cyclical} performing especially well in case of resnet and other deep architectures \cite{smith2018disciplined}.


\textbf{Batch normalization (BN)} Authors of BN in \cite{ioffe2015batch}, section 3.3 and 3.4, elaborate on the regularization and effect BN has on weights. BN arguably makes training more resilient to the parameter scale. In particular they argue that back-propagation through the layer is unaffected by the scale of its parameters and more over, larger weights lead to smaller gradients due to larger variance in nodes and thus also the denominator in the BN. In the effect BN stabilize the parameter growth. Referring to the Corollary \ref{corollary_3} it stabilizes the growth of eigenvalues in layers and in $\prod\limits_{l \in L_{f}} \sigma^2_{l}$ and therefore slows down the rate of convergence towards "complex" functions that overfit which seems to be the inevitable course of actions of vanilla SGD as we have shown in Fig.2 of the paper.

\textbf{Drop-out}
See. \cite{srivastava2014dropout}. The authors of batch normalization in their work \cite{ioffe2015batch}, see above, suggest based on experiments that batch normalization reduces, partially or completely, the need of drop-out, suggesting similar effect on the training. The same arguments as for BN above would apply here. Indeed dropput by multiplying, a random or deterministic, subset of layers' outputs by zero \cite{srivastava2014dropout, goodfellow2016deep} and thus deactivating weights leading to those units from gradient update at the given step slows the growth of the weight parameters similarly to BN. Or alternatively using \cite{hinton2012improving} to approximate dropout effect as the full model but with outgoing weights of node $i$ multiplied by probability of including the unit $i$. Applied to the path products Eq. \eqref{col_2:pathproducts} all (independent by method design) probabilities multiply leading to the regularizing effect of the dropout. The deeper the network the larger the effect by this approximation.

\textbf{Early stopping} Combined with a common random initialization of weights around zero, i.e. of \cite{he2015delving, glorot2010understanding} zero mean and variance that corrects for number of units of the network weights are gradually updated over the course of the learning as also shown in the experiment of Fig.1. Results of Fig.2 also suggests that vanilla SGD leads towards complex over fitting map $f$ characterized by large weights. Early stopping is a effective and robust way to stop weights along the way, \cite{li2020gradient}, and earlier it stops the smaller the upper bound of max eigenvalues in Eq. \eqref{col_2:pathproducts_upperb} is obtained hence producing 'simpler' functions.

\subsubsection*{Coordinate Representation Invariance Principle}
Another interesting topic for future work is to explore the number and placement of generalization layer in the original architecture. Experiments, Fig.\ref{fig2:regularization_by_longskips} vs. Fig.\ref{fig:gld_performance_1} suggest that better results are achieved when generalization layer is placed between encoder and decoder block of ResNet. Moreover, significantly better generalization properties of GLD models raise the research question of why that is so when according to Proposition 3.1 placements should not matter.

It can be motivated by following invariance principle\footnote{for the idea of invariance see \cite{amari2016information, chentsov1982statistical} where it is applied on the transformation of variables, however. That is in a different context.} with regards to an arbitrarily chosen input representation.

Recall that in the "coordinate representation of data manifold" view the input representation is rather arbitrary according to \cite{hauser2018principles}. So while Proposition 3.1 addresses generalization properties w.r.t. inputs $\bm{x}$ or seen from a forward pass perspective w.r.t. all the layers before the GL because it regularizes corresponding path products, all the layers following the GL one are unregularized and (may) cause an over-fitting. It follows from the application of Proposition 3.1 on any layer placed after the generalization layer and considering it a new input representation of the shallower model.

\section{Conclusions}
This paper develops a novel approach to the generalization of deep learning, a unifying geometrical perspective, the Learning in the manifold of distributions. It encompasses both classification and prediction neural networks models. Devised theory and Corollary \ref{corollary_3} is used to design a new method called "generalization layer" that is embedded into the architecture of the model as a structural regularizer. Further, the developed framework suggests that in deep enough models, as opposed to shallow models, such a regularizer enables both, extreme accuracy and generalization, to be reached.

In the experimental section we empirically verify that even simple setups, i.e. an imputing an extra "generalization layer" and keeping its eigenvalues low, improves the generalization. Another variant of structural regularizer based on \textit{generalization layer} concept using drop-out is developed. To confirm many ways to implement the generalization by structure are possible. And more importantly to test the role the placement of the generalization layer in the architecture plays. 

On that note the outstanding results on the CIFAR10 dataset corroborate the theory as well as the validity of invariance to coordinate representations principle from the discussion section. 

In conclusion, to impose the invariance of the model to the arbitrary coordinate representation of data manifold the generalization layer has to be placed \textit{after} all the layers that are to generalize well. The experiments with generalizing layer with drop-out placed after encoder block reported in Table \ref{tab:ResNet_comparison} confirmed these conclusions and with only 56 layers it outperforms by a margin the deepest 1202 layers ResNet model from original paper \cite{he2016deep}.

Further we discuss common regularization techniques that are placed into a perspective of this paper and are shown to be in line with its theory. Overall we believe that "generalization by design" provides both theoretical and methodological novelties and we hope to inspire a new line of research leading to better generalizing architectures.
\bibliography{listof}

\begin{thebibliography}{42}
\providecommand{\natexlab}[1]{#1}
\providecommand{\url}[1]{\texttt{#1}}
\expandafter\ifx\csname urlstyle\endcsname\relax
  \providecommand{\doi}[1]{doi: #1}\else
  \providecommand{\doi}{doi: \begingroup \urlstyle{rm}\Url}\fi

\bibitem[Ioffe and Szegedy(2015)]{ioffe2015batch}
Sergey Ioffe and Christian Szegedy.
\newblock Batch normalization: Accelerating deep network training by reducing
  internal covariate shift.
\newblock In \emph{International conference on machine learning}, pages
  448--456. PMLR, 2015.

\bibitem[He et~al.(2016)He, Zhang, Ren, and Sun]{he2016deep}
Kaiming He, Xiangyu Zhang, Shaoqing Ren, and Jian Sun.
\newblock Deep residual learning for image recognition.
\newblock In \emph{Proceedings of the IEEE conference on computer vision and
  pattern recognition}, pages 770--778, 2016.

\bibitem[Huh et~al.(2021)Huh, Mobahi, Zhang, Cheung, Agrawal, and
  Isola]{huh2021low}
Minyoung Huh, Hossein Mobahi, Richard Zhang, Brian Cheung, Pulkit Agrawal, and
  Phillip Isola.
\newblock The low-rank simplicity bias in deep networks.
\newblock \emph{arXiv preprint arXiv:2103.10427}, 2021.

\bibitem[Jacot et~al.(2018)Jacot, Gabriel, and Hongler]{jacot2018neural}
Arthur Jacot, Franck Gabriel, and Cl{\'e}ment Hongler.
\newblock Neural tangent kernel: Convergence and generalization in neural
  networks.
\newblock \emph{arXiv preprint arXiv:1806.07572}, 2018.

\bibitem[Kawaguchi et~al.(2017)Kawaguchi, Kaelbling, and
  Bengio]{kawaguchi2017generalization}
Kenji Kawaguchi, Leslie~Pack Kaelbling, and Yoshua Bengio.
\newblock Generalization in deep learning.
\newblock \emph{arXiv preprint arXiv:1710.05468}, 2017.

\bibitem[Goodfellow et~al.(2016)Goodfellow, Bengio, Courville, and
  Bengio]{goodfellow2016deep}
Ian Goodfellow, Yoshua Bengio, Aaron Courville, and Yoshua Bengio.
\newblock \emph{Deep learning}, volume~1.
\newblock MIT press Cambridge, 2016.

\bibitem[Zhang et~al.(2016)Zhang, Bengio, Hardt, Recht, and
  Vinyals]{zhang2016understanding}
Chiyuan Zhang, Samy Bengio, Moritz Hardt, Benjamin Recht, and Oriol Vinyals.
\newblock Understanding deep learning requires rethinking generalization.
\newblock \emph{arXiv preprint arXiv:1611.03530}, 2016.

\bibitem[Banerjee et~al.(2005)Banerjee, Merugu, Dhillon, and
  Ghosh]{banerjee2005clustering}
Arindam Banerjee, Srujana Merugu, Inderjit~S Dhillon, and Joydeep Ghosh.
\newblock Clustering with bregman divergences.
\newblock \emph{Journal of machine learning research}, 6\penalty0
  (Oct):\penalty0 1705--1749, 2005.

\bibitem[Hauser(2018)]{hauser2018principles}
Michael~B Hauser.
\newblock Principles of riemannian geometry in neural networks.
\newblock 2018.

\bibitem[Amari(2016)]{amari2016information}
Shun-ichi Amari.
\newblock \emph{Information geometry and its applications}, volume 194.
\newblock Springer, 2016.

\bibitem[He et~al.(2020)He, Liu, and Tao]{he2020resnet}
Fengxiang He, Tongliang Liu, and Dacheng Tao.
\newblock Why resnet works? residuals generalize.
\newblock \emph{IEEE transactions on neural networks and learning systems},
  31\penalty0 (12):\penalty0 5349--5362, 2020.

\bibitem[Zhang et~al.(2019)Zhang, Han, Wynter, Low, and
  Kankanhalli]{zhang2019towards}
Jingfeng Zhang, Bo~Han, Laura Wynter, Kian~Hsiang Low, and Mohan Kankanhalli.
\newblock Towards robust resnet: A small step but a giant leap.
\newblock \emph{arXiv preprint arXiv:1902.10887}, 2019.

\bibitem[Rousseau and Fablet(2018)]{rousseau2018residual}
Francois Rousseau and Ronan Fablet.
\newblock Residual networks as geodesic flows of diffeomorphisms.
\newblock \emph{arXiv preprint arXiv:1805.09585}, 2018.

\bibitem[Ronneberger et~al.(2015)Ronneberger, Fischer, and
  Brox]{ronneberger2015u}
Olaf Ronneberger, Philipp Fischer, and Thomas Brox.
\newblock U-net: Convolutional networks for biomedical image segmentation.
\newblock In \emph{International Conference on Medical image computing and
  computer-assisted intervention}, pages 234--241. Springer, 2015.

\bibitem[Huang et~al.(2017)Huang, Liu, Van Der~Maaten, and
  Weinberger]{huang2017densely}
Gao Huang, Zhuang Liu, Laurens Van Der~Maaten, and Kilian~Q Weinberger.
\newblock Densely connected convolutional networks.
\newblock In \emph{Proceedings of the IEEE conference on computer vision and
  pattern recognition}, pages 4700--4708, 2017.

\bibitem[Ali et~al.(2020)Ali, Dobriban, and Tibshirani]{ali2020implicit}
Alnur Ali, Edgar Dobriban, and Ryan Tibshirani.
\newblock The implicit regularization of stochastic gradient flow for least
  squares.
\newblock In \emph{International Conference on Machine Learning}, pages
  233--244. PMLR, 2020.

\bibitem[Roberts(2021)]{roberts2021sgd}
Daniel~A Roberts.
\newblock Sgd implicitly regularizes generalization error.
\newblock \emph{arXiv preprint arXiv:2104.04874}, 2021.

\bibitem[Smith et~al.(2021)Smith, Dherin, Barrett, and De]{smith2021origin}
Samuel~L Smith, Benoit Dherin, David~GT Barrett, and Soham De.
\newblock On the origin of implicit regularization in stochastic gradient
  descent.
\newblock \emph{arXiv preprint arXiv:2101.12176}, 2021.

\bibitem[Chizat and Bach(2020)]{chizat2020implicit}
Lenaic Chizat and Francis Bach.
\newblock Implicit bias of gradient descent for wide two-layer neural networks
  trained with the logistic loss.
\newblock In \emph{Conference on Learning Theory}, pages 1305--1338. PMLR,
  2020.

\bibitem[Volhejn and Lampert(2021)]{volhejn2021does}
V{\'a}clav Volhejn and Christoph Lampert.
\newblock Does sgd implicitly optimize for smoothness?
\newblock \emph{Pattern Recognition}, 12544:\penalty0 246, 2021.

\bibitem[Kalimeris et~al.(2019)Kalimeris, Kaplun, Nakkiran, Edelman, Yang,
  Barak, and Zhang]{kalimeris2019sgd}
Dimitris Kalimeris, Gal Kaplun, Preetum Nakkiran, Benjamin~L Edelman, Tristan
  Yang, Boaz Barak, and Haofeng Zhang.
\newblock $\{$SGD$\}$ on neural networks learns functions of increasing
  complexity.
\newblock In \emph{Advances in Neural Information Processing Systems 32: Annual
  Conference on Neural Information Processing Systems 2019}, 2019.

\bibitem[Nakkiran et~al.(2019)Nakkiran, Kaplun, Kalimeris, Yang, Edelman,
  Zhang, and Barak]{nakkiran2019sgd}
Preetum Nakkiran, Gal Kaplun, Dimitris Kalimeris, Tristan Yang, Benjamin~L
  Edelman, Fred Zhang, and Boaz Barak.
\newblock Sgd on neural networks learns functions of increasing complexity.
\newblock \emph{arXiv preprint arXiv:1905.11604}, 2019.

\bibitem[Xie et~al.(2020)Xie, Sato, and Sugiyama]{xie2020diffusion}
Zeke Xie, Issei Sato, and Masashi Sugiyama.
\newblock A diffusion theory for deep learning dynamics: Stochastic gradient
  descent exponentially favors flat minima.
\newblock \emph{arXiv e-prints}, pages arXiv--2002, 2020.

\bibitem[Li et~al.(2017)Li, Tai, and Weinan]{li2017stochastic}
Qianxiao Li, Cheng Tai, and E~Weinan.
\newblock Stochastic modified equations and adaptive stochastic gradient
  algorithms.
\newblock In \emph{International Conference on Machine Learning}, pages
  2101--2110, 2017.

\bibitem[Smith and Le(2017)]{smith2017bayesian}
Samuel~L Smith and Quoc~V Le.
\newblock A bayesian perspective on generalization and stochastic gradient
  descent.
\newblock \emph{arXiv preprint arXiv:1710.06451}, 2017.

\bibitem[Gantmakher(1959)]{gantmakher1959theory}
Feliks~Ruvimovich Gantmakher.
\newblock \emph{The theory of matrices}, volume 131.
\newblock American Mathematical Soc., 1959.

\bibitem[Bhatia(1997)]{bhatia1997a}
Rajendra Bhatia.
\newblock \emph{Matrix Analysis}.
\newblock Springer New York, 1997.
\newblock ISBN 9781461206538, 0387948465, 1461206537, 1461268575,
  9780387948461.

\bibitem[Smith(2017)]{smith2017cyclical}
Leslie~N Smith.
\newblock Cyclical learning rates for training neural networks.
\newblock In \emph{2017 IEEE winter conference on applications of computer
  vision (WACV)}, pages 464--472. IEEE, 2017.

\bibitem[Paszke et~al.(2017)Paszke, Gross, Chintala, Chanan, Yang, DeVito, Lin,
  Desmaison, Antiga, and Lerer]{paszke2017automatic}
Adam Paszke, Sam Gross, Soumith Chintala, Gregory Chanan, Edward Yang, Zachary
  DeVito, Zeming Lin, Alban Desmaison, Luca Antiga, and Adam Lerer.
\newblock Automatic differentiation in pytorch.
\newblock 2017.

\bibitem[Idelbayev()]{Idelbayev18a}
Yerlan Idelbayev.
\newblock Proper {ResNet} implementation for {CIFAR10/CIFAR100} in {PyTorch}.
\newblock \url{https://github.com/akamaster/pytorch_resnet_cifar10}.
\newblock Accessed: 20xx-xx-xx.

\bibitem[He et~al.(2015)He, Zhang, Ren, and Sun]{he2015delving}
Kaiming He, Xiangyu Zhang, Shaoqing Ren, and Jian Sun.
\newblock Delving deep into rectifiers: Surpassing human-level performance on
  imagenet classification.
\newblock In \emph{Proceedings of the IEEE international conference on computer
  vision}, pages 1026--1034, 2015.

\bibitem[Glorot and Bengio(2010)]{glorot2010understanding}
Xavier Glorot and Yoshua Bengio.
\newblock Understanding the difficulty of training deep feedforward neural
  networks.
\newblock In \emph{Proceedings of the thirteenth international conference on
  artificial intelligence and statistics}, pages 249--256. JMLR Workshop and
  Conference Proceedings, 2010.

\bibitem[Smith(2018)]{smith2018disciplined}
Leslie~N Smith.
\newblock A disciplined approach to neural network hyper-parameters: Part
  1--learning rate, batch size, momentum, and weight decay.
\newblock \emph{arXiv preprint arXiv:1803.09820}, 2018.

\bibitem[Srivastava et~al.(2014)Srivastava, Hinton, Krizhevsky, Sutskever, and
  Salakhutdinov]{srivastava2014dropout}
Nitish Srivastava, Geoffrey Hinton, Alex Krizhevsky, Ilya Sutskever, and Ruslan
  Salakhutdinov.
\newblock Dropout: a simple way to prevent neural networks from overfitting.
\newblock \emph{The journal of machine learning research}, 15\penalty0
  (1):\penalty0 1929--1958, 2014.

\bibitem[Hinton et~al.(2012)Hinton, Srivastava, Krizhevsky, Sutskever, and
  Salakhutdinov]{hinton2012improving}
Geoffrey~E Hinton, Nitish Srivastava, Alex Krizhevsky, Ilya Sutskever, and
  Ruslan~R Salakhutdinov.
\newblock Improving neural networks by preventing co-adaptation of feature
  detectors.
\newblock \emph{arXiv preprint arXiv:1207.0580}, 2012.

\bibitem[Li et~al.(2020)Li, Soltanolkotabi, and Oymak]{li2020gradient}
Mingchen Li, Mahdi Soltanolkotabi, and Samet Oymak.
\newblock Gradient descent with early stopping is provably robust to label
  noise for overparameterized neural networks.
\newblock In \emph{International Conference on Artificial Intelligence and
  Statistics}, pages 4313--4324. PMLR, 2020.

\bibitem[Chentsov(1982)]{chentsov1982statistical}
NN~Chentsov.
\newblock Statistical decision rules and optimal inference. transl. math.
\newblock \emph{Monographs, American Mathematical Society, Providence, RI},
  1982.

\bibitem[Sharma and Sharma(2017)]{sharma2017activation}
Sagar Sharma and Simone Sharma.
\newblock Activation functions in neural networks.
\newblock \emph{Towards Data Science}, 6\penalty0 (12):\penalty0 310--316,
  2017.

\bibitem[Hiriart-Urruty and Lemar{\'e}chal(2012)]{hiriart2012fundamentals}
Jean-Baptiste Hiriart-Urruty and Claude Lemar{\'e}chal.
\newblock \emph{Fundamentals of convex analysis}.
\newblock Springer Science \& Business Media, 2012.

\bibitem[Wainwright and Jordan(2008)]{Wainwright08graphicalmodels}
Martin~J Wainwright and Michael~Irwin Jordan.
\newblock \emph{Graphical models, exponential families, and variational
  inference}.
\newblock Now Publishers Inc, 2008.

\bibitem[Bauschke et~al.(2011)Bauschke, Combettes, et~al.]{bauschke2011convex}
Heinz~H Bauschke, Patrick~L Combettes, et~al.
\newblock \emph{Convex analysis and monotone operator theory in Hilbert
  spaces}, volume 408.
\newblock Springer, 2011.

\bibitem[Banerjee et~al.(2004)Banerjee, Dhillon, Ghosh, and
  Merugu]{banerjee2004information}
Arindam Banerjee, Inderjit Dhillon, Joydeep Ghosh, and Srujana Merugu.
\newblock An information theoretic analysis of maximum likelihood mixture
  estimation for exponential families.
\newblock In \emph{Proceedings of the twenty-first international conference on
  Machine learning}, page~8, 2004.

\end{thebibliography}
\bibliographystyle{unsrtnat}
\clearpage
\section{Supplementary Material}
\appendix

\section{Proof of Proposition 3.1} \label{sec:appendix_proof}
This section is dedicated to proof of Proposition 3.1 from the main body of the paper and its consequences. It includes only a necessary minimum of definitions for brevity and we refer to an excellent manuscript \cite{hauser2018principles} or other resources where needed. The section concludes with general remarks on wider consequences of statements proven.

From now on we use upper indices to denote coordinates while lower ones index vectors/tensors. Also the Einstein summation is used whenever pair of indexes appears in the equation and it is not stated otherwise.

Without loss of generality, with a note that popular neural network architecture with ReLu activations can be seen as a limit of models using softmax (or softplus) activations \cite{sharma2017activation}, consider $f$ being a smooth $\mathcal{C}_2$ a non-invertible map $f:I \xrightarrow{} F$ between input manifold $I$ with a coordinate system denoted $\xi^i$ and output manifold $F$ with a coordinate system $\theta^{\kappa}$. As such it defines push-forward operator $f^*$ that acts on tangent spaces: $f^*:TI \xrightarrow{} TF$ and it can be viewed as a generalized coordinate free derivative. We leave the details out and refer interested reader to \cite{hauser2018principles} for more.

Following the main document we'd like to link generalization of the network $f$ to its structure given by a composition of layer to layer maps, defined by layer dimension, activation function(s) and weights $\mathrm{W}^{(l)}$ of layer $l$. 

Because an input space is an Euclidean space with an ortho-normal basis it is the same as its tangent space. Let's denote $g_{i,j}(\xi)=\langle \bm{e}_i, \bm{e}_j \rangle$ an inner product on the input space. It is a constant identity matrix in our case, i.e. $g_{i,j}(\xi)=\delta_{i,j}$ and denoted further as $g_{i,j}$. Also as of this point further the manifold $I$ coincides with input data layer $X$. We'll use both notations interchangeably from now on in this section.

Similarly for an output layer (a probabilistic manifold of probability distributions defined by a choice of Bregmann loss function corresponding to a cummulant function $\psi$) we have its metric tensor defined as 
\begin{align}
    g_{\kappa,\lambda}(\theta)=E[\partial_{\kappa}\log p(\tilde{\bm{y}},\theta) \partial_{\lambda}\log p(\tilde{\bm{y}},\theta)] \text{ (FIM) }=\partial_{\kappa}\partial_{\lambda}\psi(\theta) \label{appeq:FIM_as_hessian}
\end{align}
where $\tilde{\bm{y}}=\nabla \varphi(\bm{y})$ is a random variable of the output layer Exponential family distribution derived from a dual Bregman divergence using \eqref{def:mean_natural_duality}, $\varphi$ and $\psi$ being convex conjugates, see Section \ref{sec:appendix_bregman} or \cite{banerjee2005clustering} and where the second equation is a consequence of a dual flatness of probabilistic manifold, \cite{amari2016information}, Theorem 2.1. therein. 

An infinitesimally small line element $d\theta_{\kappa}$ in an ouput tangent space $TF$ relates to an input $dx^i$ vector by Jacobian $J^{\kappa}_i$
\begin{align}
    d\theta_{\kappa}&=J^{\kappa}_i dx^i \label{appeq:small_line_element} \\
    J^{\kappa}_i&= \frac{\partial f^{\kappa}}{\partial \xi^i} (\bm{x}) \label{eq:jacobian}
\end{align}
where Einstein summation over index $i$ is used.

A shape on the output layer is given by its metric tensor, i.e., Fisher information matrix (FIM) of the probabilistic model induced by the chosen Bregman divergence, see Section \ref{sec:appendix_bregman} or \cite{amari2016information}. At a given point $\tilde{\bm{y}}$ it characterized by positive semidefinite matrix $H$ (hessian) its curvature can be analyzed by exploring its eigenvalues. In particular the flatter the landscape w.r.t. outputs of network $f(\bm{x})$, that play the role of natural parameters of induced Exponential family see Appendix \ref{sec:appendix_bregman}, the smaller the change in likelihood as a function of $d\theta$ in its neighbourhood $f(\bm{x}) + d\theta, d\theta \in TF(f(\bm{x}))$ and thus the better it generalizes.

\textbf{Back-propagated Inner Product of the Manifold of Distributions $\zeta(x)$} \\
Nevertheless we'd like to know how the log likelihood changes with regards to an input layer and how does it depend on the model parameters. We'll follow the "flatness" of the loss landscape as measure of generalization above. But we measure the curvature of the "back-propagated Hessian" ("pulled back"\footnote{\cite{hauser2018principles} show that it is equivalent to pulling back the (output layer) frame bundle of a data manifold along the map $f$ and such it is well defined operation. We omit formal definitions for brevity and interested reader is kindly referred to \cite{hauser2018principles} and \cite{amari2016information}} metric) in the input layer instead. It is defined by its elements $\zeta(\bm{x}) := \{\zeta^{i,j}(\bm{x})\}$ as follows:
\begin{align}
     \zeta^{i,j}(\bm{x}):=J_i^{\kappa}(f(\bm{x}))\partial_{\kappa}\partial_{\lambda}\psi(f(\bm{x}))J_j^{\lambda}(f(\bm{x})) \label{def:backpropagated_hessian}
\end{align}
whenever $\bm{x} \in I$, and $f(\bm{x}) \in F$. By the convexity of $\psi$ that follows from the choice of Bregman divergence as a loss the $\zeta(\bm{x})$ defines a real positive semidefinite and symmetric quadratic form on a finite dimensional input vector space $\mathrm{R}^{dim(\mathrm{X})}$.

This "back-propagated Hessian" carries the information of functional characteristics of a map $f$\footnote{similar to Riemann-Christoffel (RC) curvature tensor that captures the change of vector transported back to the origin along the closed loop curve. This round-the-world transport changes the original vector depending on the curvature of the manifold, see \cite{amari2016information}} along which we pull the outer Hessian back to an input layer and we can use it to capture the degree of the "generalization" of the map $f$ at datum $\bm{x}$. 

We again emphasize that here we for brevity leave out the details of defining "pull-back" on frame bundles and refer to \cite{hauser2018principles} where it is properly done ensuring that this "back-propagation" is well defined.

\textbf{Max eigenvalue of $\zeta(x)$ and well generalizing functions}\\
In a case $f$ was (invertible) coordinate transformation we'd have two metric tensors related by standard Jacobian relation:
\begin{align}
    g_{i,j}=J_i^{\kappa}(\theta)J_j^{\lambda}(\theta)g_{\kappa,\lambda}(\theta) \label{def:jacobian}
\end{align}
where $J_i^{\kappa}$ is a Jacobian matrix which depends on $\theta$ in general. Local distances between input and output manifolds would relate as $ds^2=g_{i,j}dx^idx^j=g_{\kappa,\lambda}(\theta)d\theta_{\kappa}d\theta_{\lambda}$.

Since $f$ is \textbf{not} a change of coordinates in general we cannot use the last two expressions to relate distances in tangent spaces in a usual way.

Instead we make use of a dual flatness of the output layer with an inner product $g_{\kappa,\lambda}(\bm{\theta})$ induced by a choice of Bregman loss. A small local distance $ds^2$ in the output layer can be written in output layer coordinates as well as in input layer ones as follows (using Einstein summation):
\begin{align}
    ds^2=g_{\kappa,\lambda}(\bm{\theta})d\theta^{\kappa}d\theta^{\lambda}=\partial_{\kappa}\partial_{\lambda}\psi(\bm{\theta})J^{\kappa}_i dx^iJ^{\lambda}_j dx^j \label{proofeq:ds2_1}
\end{align}
where the first equality follows from output layer being metric space and we plugged in Eq.\eqref{appeq:small_line_element} and Eq.\eqref{appeq:FIM_as_hessian} to obtain the second equality. More over, following the notation of \cite{amari2016information}, $\theta^{\lambda}$ denotes a $\kappa^{th}$ coordinate curve (in our case of a dually flat space it is a geodesics) and a tangent vector $\bm{e}_{\lambda}$ is defined as a partial derivative operator $\bm{e}_{\lambda}:=\partial_{\lambda}=\dfrac{\partial}{\partial\theta^{\lambda}}$ that operates on a differentiable function and it gives its derivative in the direction of a coordinate curve $\theta^{\lambda}$. 

Formula $\partial_{\kappa}\partial_{\lambda}\psi(\bm{\theta})$ is understood accordingly as a composition of two derivative operators, acting on differentiable functions $\psi(\bm{\theta})$ and $\partial_{\lambda}\psi(\bm{\theta})$ consequently, along the coordinate curves $\theta^{\lambda}$ and $\theta^{\kappa}$, see  \cite{amari2016information}, Section 5.

By of convexity of $\psi$ we have that $\zeta(\bm{x})$ is a positive semidefinite symmetric real matrix of a dimension of the input layer, $dim(\mathrm{X})$, for all $\bm{x} \in I$ as defined in Eq.\ref{def:backpropagated_hessian}. As such its eigenvalues are all nonegative. Let's denote $\sigma_{max}(\bm{x})$ its largest eigenvalue evaluated at $\bm{\theta} = f(\bm{x})$ as follows:

\begin{definition}[$\sigma_{max}(\bm{x})$]\label{appdef:sigma_max_label} Given a datum $\bm{x} \in \mathrm{R}^{dim(\mathrm{X})}, dim(\mathrm{X}) < \infty$ and the smooth map $f:\mathrm{X} \xrightarrow{} \mathrm{\tilde{Y}}$ the $\sigma_{max}(\bm{x})$ is defined as the largest positive eigenvalue positive semidefinite matrix $\zeta(\bm{x})$ defined by Eq.\ref{def:backpropagated_hessian}.
\begin{align}
    \sigma_{max}(\bm{x}):=\max_{\sigma}\{\sigma: \det(\sigma I -\zeta(\bm{x}))=0\} \label{appdef:sigma_max}
\end{align}
where $\det()$ and $I$ denote a determinant and the identity matrix respectively.
\end{definition}
Note that this is a standard definition and we restated it only to capture and emphasize its dependence on input datum $\bm{x}$. 

\begin{remark} [Generalization driven by Jacobian of $f$]
To assess the degree of "flatness" of the loss landscape with regards to a neighbourhood of the given input $\bm{x}$ the largest eigenvalue of pulled-back metric $\sigma_{max}$ can be used as it defines the curvature of this neighbourhood as depicted in Fig. \ref{fig:pulled_back_hessian_curvature}. The larger the $\sigma_{max}(\bm{x})$ the more curved is the loss landscape (over all directions in $I$) in the neighbourhood of the input $\bm{x}$. Note that this curvature $\sigma_{max}(\bm{x})$ is given by two factors: 1.) curvature of the probability output manifold, i.e. how close prediction $f(\bm{x})$ is to the ML estimate of the induced probabilistic model as well as 2.) the derivatives of $f(\bm{x})$ w.r.t. $\bm{x}$ that capture how $f$ changes w.r.t. its inputs. This is an essential concept when generalization instead of Hessian relies on Jacobians (first derivatives of $f$ or consequently loss if output layer is taken into consideration). See. also section 2 for the idea of generalization within learning in the manifold of distributions concept. 
\end{remark}

\begin{figure*}[t]
\centering
  \includegraphics[width=.95\linewidth]{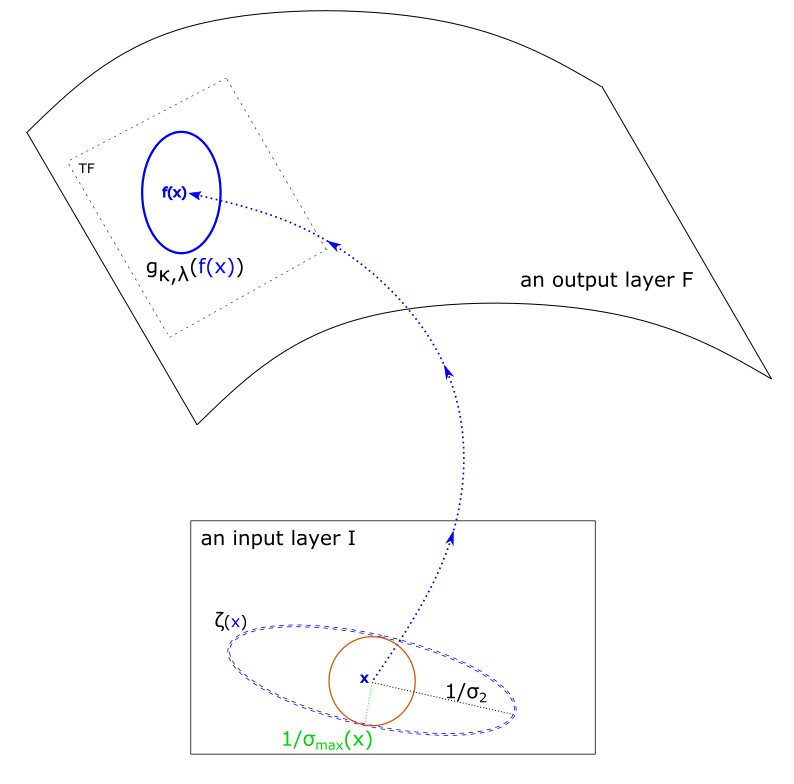}
    \caption{\textbf{Pulled-back metric $\zeta (x)$}}
\label{fig:pulled_back_hessian_curvature}
\end{figure*}

Further let's note that the definition \eqref{appdef:sigma_max_label} is 'local' and 'differential' in the sense it depends on a data point $x$ of the data manifold $M$ and valid in an infinitesimal neighbourhood of $x$ (and therefor $f(x)$ from its assumed smoothness) by the use of differential geometry tools. This 'locality' is a given by the fact that the outer layer inner product (FIM) smoothly varies over coordinates of an output layer as a consequence of its dual flatness and more over a map $f$, which could be thought of as a "coordinate transformation" between tangent spaces, also changes non-linearly with $x$ in general.

Next we restate the Proposition 3.1 a proof of which is now straightforward consequence of the preceding text:
\begin{proposition}[Proposition 3.1]\label{appproposition:generalization_prop}
In the context of the above, assuming activation functions used in architecture of $f$ are 1-Lipschitz (for definition see \ref{appdef:lipschitz}) a pull-back metric from output layer into an input Euclidean manifold around datum $x$ is up to a constant defined by following positive semidefinite matrix:
\begin{align}
    \zeta(x) &= P(x) {\zeta_{\psi}(\bm{x})} P^T(x) \label{appeq:prop31_zeta}
\end{align} 
where $P(x)$ is a real matrix $\{p_{i,j}\}$ with elements:
\begin{align}
    p_{i,j} &:= \sum\limits_{p \in \mathcal{BP}} {}^i_j\pi_p L_p(x), \label{appcol_2:pathproducts}
\end{align}
$\zeta_{\psi}(x):=\partial_{\lambda}\partial_{\lambda}\psi(f(\bm{x}))$ is fully determined by chosen Bregman divergence as a loss. Further $L_p(x)$ is positive real function formed from products of activation functions derivatives along the path $p$ such that $0 \leq L_p(x) \leq 1$ and where $\mathcal{BP}$ is a set of all "back-propagation" paths connecting any input layer node to an output layer node through the network $f$ such that each layer has exactly one node present in the path. Then ${}^i_j\pi_p = \prod\limits_{\{l:w_l \in p\}} w_l$ is a product of all weights from input node $i$ to output node $j$ along the path $p$.
\end{proposition}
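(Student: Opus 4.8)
The plan is to peel the statement back to a single fact about feed-forward Jacobians: that, by the chain rule over layers, the Jacobian of $f$ is an ordered product of (diagonal activation-derivative) $\times$ (weight) matrices, and that expanding this product entry-wise is exactly a sum over input-to-output paths. Everything else in the Proposition is then bookkeeping on top of the definition of $\zeta(\bm{x})$.

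First I would reduce to the Jacobian. By Eq.~\eqref{def:backpropagated_hessian} the pulled-back metric has entries $\zeta^{i,j}(\bm{x})=J_i^{\kappa}(f(\bm{x}))\,\partial_{\kappa}\partial_{\lambda}\psi(f(\bm{x}))\,J_j^{\lambda}(f(\bm{x}))$. Setting $P(\bm{x})$ to be the matrix with entries $P_i^{\kappa}:=J_i^{\kappa}(f(\bm{x}))$ and $\zeta_{\psi}(\bm{x}):=\partial_{\kappa}\partial_{\lambda}\psi(f(\bm{x}))$, this reads $\zeta(\bm{x})=P(\bm{x})\,\zeta_{\psi}(\bm{x})\,P^{T}(\bm{x})$, which is Eq.~\eqref{appeq:prop31_zeta}; positive semidefiniteness is immediate since $\zeta_{\psi}$ is a Hessian of the convex cumulant $\psi$ (inherited from the Bregman loss, cf.~Eq.~\eqref{appeq:FIM_as_hessian}) and is therefore PSD, and conjugation by $P$ preserves this. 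So it suffices to prove that each $P_i^{\kappa}=J_i^{\kappa}$ equals $\sum_{p\in\mathcal{BP}}{}^i_{\kappa}\pi_p\,L_p(\bm{x})$ with $0\le L_p(\bm{x})\le 1$.

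Next I would write $f$ as the composition of its layer maps — for each layer $l$, an affine map with weight matrix $\mathrm{W}^{(l)}$ followed by a coordinate-wise activation $\phi_l$ — and differentiate with respect to the Euclidean input coordinates $\xi^i$. Using that back-propagation is the coordinate-free differential over the manifolds involved, so the ordinary chain rule applies (cf.~the discussion around Eq.~\eqref{eq:jacobian}), the Jacobian factorises as an ordered matrix product $J=D^{(L)}\mathrm{W}^{(L)}D^{(L-1)}\mathrm{W}^{(L-1)}\cdots D^{(1)}\mathrm{W}^{(1)}$, where $D^{(l)}$ is the diagonal matrix of the layer-$l$ activation derivatives evaluated at the layer-$l$ pre-activations (themselves functions of $\bm{x}$). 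Multiplying out this product entry-wise, the $(i,\kappa)$ entry becomes a sum, over all ways of picking exactly one node in each hidden layer — i.e.~over all paths $p\in\mathcal{BP}$ from input node $i$ to output node $\kappa$ — of the product of the weights met along $p$ times the product of the activation derivatives at the nodes of $p$. Identifying the first factor with ${}^i_{\kappa}\pi_p=\prod_{\{l:w_l\in p\}}w_l$ and the second with $L_p(\bm{x})$ yields Eq.~\eqref{appcol_2:pathproducts}; and since each activation is $1$-Lipschitz (Definition~\ref{appdef:lipschitz}) and, in the cases considered, has pointwise derivative in $[0,1]$, every factor of $L_p(\bm{x})$ lies in $[0,1]$, hence so does their product.

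The routine part is the chain-rule bookkeeping and the re-indexing. The delicate points I expect to spend care on are: (a) $L_p(\bm{x})$ is written as a product ``along $p$'', but its factors are evaluated at pre-activations that depend on the whole input $\bm{x}$ through every incoming path, so one must state carefully that the claimed identity is an equality of functions of $\bm{x}$ and not a purely combinatorial identity over weights alone; and (b) justifying the differential-geometric pull-back rigorously and accommodating non-smooth activations such as ReLU — here I would invoke the coordinate-free formalism of \cite{hauser2018principles} and treat ReLU as a limit of smooth (softplus/softmax) activations, as the paper does elsewhere. The ``up to a constant'' qualifier in the statement is exactly what lets one move between the frame-bundle pull-back and the bare matrix expression without tracking normalisation constants.
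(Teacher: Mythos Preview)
Your proposal is correct and follows essentially the same route as the paper's own proof: identify $P(\bm{x})$ with the Jacobian so that Eq.~\eqref{appeq:prop31_zeta} is just the matrix form of the definition Eq.~\eqref{def:backpropagated_hessian}, then obtain Eq.~\eqref{appcol_2:pathproducts} by the layer-wise chain-rule factorisation of $J$ and its entry-wise expansion as a sum over paths, with the $1$-Lipschitz assumption giving $0\le L_p(\bm{x})\le 1$. The paper's proof is only a two-sentence sketch of precisely these steps; your write-up is more detailed (and your caveats (a) and (b) are well taken) but not a different argument.
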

\begin{proof}
The first statement of Eq.\ref{appeq:prop31_zeta} is a rewritten definition of $\zeta_{\psi}(x)$, Eq.\ref{def:backpropagated_hessian}, into a matrix form where we take $P(x):=J(\bm{x})$ to be Jacobian of $f$ with elements $\{j_{i,\kappa}\}:=J^{\kappa}_i= \frac{\partial f^{\kappa}}{\partial \xi^i} (\bm{x})$ defined in Eq.\ref{def:jacobian} that has $dim(X)\times\dim(TF)$) dimensionality.

The second statement, Eq.\ref{appcol_2:pathproducts} follows from $P(x):=J(\bm{x})$ above and writing product of layer weight matrices in a form of sum together with assumption on $f$ being 1-Lipschitz, defined in \ref{appdef:lipschitz}.
\end{proof}

\begin{corollary}[Corollary 3.0.1, Spectral products, Informal] \label{app:corollary_3} Under conditions of Corollary \eqref{appproposition:generalization_prop} the largest eigenvalue of $\zeta(x)$ can be bounded from above by a following product of eigenvalues:
\begin{align}
    \sigma_{\psi}(x) C\prod\limits_{l \in L_{f}} \sigma^2_{l} , \text{ and $C$ is a positive constant} \label{appcol_2:pathproducts_upperb}
\end{align}
where $L_{f}$ denotes all layers of $f$ and $\sigma_{l}$ denotes the largest eigenvalue of matrix $W^{(l)}$ comprising the weights of the layer $l$ and similarly $\sigma_{\psi}(x)$ denotes the largest eigenvalue of the positive semidefinite outer layer metric tensor $\zeta_{\psi}(x)$ from Eq.\ref{appcol_2:pathproducts}.
\end{corollary}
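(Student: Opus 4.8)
The plan is to reduce the statement to three elementary facts: for a symmetric positive semidefinite matrix the largest eigenvalue equals the spectral (operator) norm; the operator norm is submultiplicative; and the Jacobian of $f$ factors over layers by the chain rule. First I would use the matrix form \eqref{appeq:prop31_zeta} of the pulled-back metric established in the proof of Proposition~\ref{appproposition:generalization_prop}, namely $\zeta(x) = J(x)\,\zeta_\psi(x)\,J(x)^T$ with $J(x)$ the Jacobian of $f$ at $x$. Since $\zeta(x)$ is symmetric positive semidefinite (convexity of $\psi$), its largest eigenvalue is $\sigma_{max}(x) = \norm{\zeta(x)}_{op}$, and submultiplicativity together with $\norm{J(x)^T}_{op} = \norm{J(x)}_{op}$ gives
\begin{align}
    \sigma_{max}(x) \;=\; \norm{\zeta(x)}_{op} \;\leq\; \norm{J(x)}_{op}^{\,2}\,\norm{\zeta_\psi(x)}_{op} \;=\; \norm{J(x)}_{op}^{\,2}\,\sigma_\psi(x),
\end{align}
where the last equality is again the PSD identity applied to the Hessian $\zeta_\psi(x)$.

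Next I would expand $J(x)$ over the layers. Writing $f$ as the composition of its layer-to-layer maps $f^{(l)}$, $l \in L_f$, the chain rule gives $J(x) = \prod_{l \in L_f} J^{(l)}$ in the natural layer order, where for a layer consisting of an affine map $W^{(l)}$ followed by a pointwise activation one has $J^{(l)} = D^{(l)} W^{(l)}$ with $D^{(l)}$ the diagonal matrix of activation derivatives at that layer's pre-activations. The $1$-Lipschitz assumption (Definition~\ref{appdef:lipschitz}) forces every such derivative into $[0,1]$, hence $\norm{D^{(l)}}_{op} \leq 1$ and $\norm{J^{(l)}}_{op} \leq \norm{W^{(l)}}_{op}$. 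Submultiplicativity then yields $\norm{J(x)}_{op} \leq \prod_{l \in L_f} \norm{W^{(l)}}_{op}$, and substituting into the display above,
\begin{align}
    \sigma_{max}(x) \;\leq\; \sigma_\psi(x)\,\prod_{l \in L_f} \norm{W^{(l)}}_{op}^{\,2}.
\end{align}
The constant $C$ in Eq.\eqref{appcol_2:pathproducts_upperb} then absorbs the passage from $\norm{W^{(l)}}_{op}$ to the quantity the statement writes as $\sigma_l$, justified by equivalence of norms on the finite-dimensional space of weight matrices of layer $l$; the same device folds in the bounded diagonal activation factors $L_p(x)$ appearing in the path-product form \eqref{appcol_2:pathproducts}, and the argument goes through unchanged for convolutional layers, whose Jacobian is again a linear map whose operator norm is controlled by the kernel weights.

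I expect the main obstacle to be this last identification rather than the chain of inequalities. The operator norm $\norm{W^{(l)}}_{op}$ is the largest \emph{singular} value of $W^{(l)}$, and for a non-symmetric weight matrix this need not be comparable to its largest eigenvalue by a dimension-free constant, so one should either read $\sigma_l$ as the largest singular value throughout, or make the (architecture-dependent) comparison constant explicit and fold it into $C$. A secondary point requiring care is bookkeeping for skip connections and for the path decomposition \eqref{appcol_2:pathproducts}: when $J(x)$ is a sum over paths of varying length the clean layer factorization must be re-derived, which still works because inserting an identity shortcut of strength $\nu$ only replaces a factor $W^{(l)}$ by $W^{(l)} + \nu I$, and $\norm{W^{(l)} + \nu I}_{op} \leq \norm{W^{(l)}}_{op} + \nu$, again absorbable into the constant. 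Everything else is the routine computation indicated in the proof stub.
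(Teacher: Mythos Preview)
Your argument is correct and is in fact cleaner than the paper's. The paper does not use operator-norm submultiplicativity; instead it bounds $\sigma_{max}(x)$ above by $\operatorname{tr}(\zeta(x))$, applies the cyclic trace property to the layer-by-layer expansion \eqref{eq:back_prop_style}, invokes $\operatorname{tr}(AB)\leq\operatorname{tr}(A)\operatorname{tr}(B)$ for symmetric factors to split into $\prod_l \operatorname{tr}\bigl({W^{(l)}}^T W^{(l)}\bigr)\operatorname{tr}\bigl(\operatorname{diag}(c'^{(l)})^2\bigr)$, and finally converts each trace back to a top eigenvalue via $\operatorname{tr}(A)\leq \dim(A)\,\sigma_{max}(A)$. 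This trace route is what produces the paper's constant $C$ as a product of layer dimensions (and of $\dim(F)$); your operator-norm route avoids that detour entirely and would give $C=1$ once $\sigma_l$ is read as the top singular value. Your closing caveat about singular values versus eigenvalues of a non-symmetric $W^{(l)}$ is well placed: the paper's proof actually lands on $\operatorname{tr}({W^{(l)}}^T W^{(l)})$, i.e.\ the squared Frobenius norm, before passing to $\sigma_l^2$, so there too the natural reading is the largest singular value, and the ``equivalence of norms'' remark in the statement covers the same slippage you flag.
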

\begin{proof}
We can rewrite $\zeta(x) = P(x) {\zeta_{\psi}(\bm{x})} P^T(x)$ from \ref{appproposition:generalization_prop} in a gradient back-propagation style as a product of weight matrices and activation derivatives over layers:
\begin{align}
    &\zeta(x) = P(x) {\zeta_{\psi}(\bm{x})} P^T(x) \notag \\
    =& W^{(0)}\odot c^{'(0)}W^{(1)}\odot c^{'(1)}\dots W^{(L)}\odot c^{'(L)} {\zeta_{\psi}(\bm{x})} {W^{(L)}\odot c^{'(L)}}^T\dots {W^{(1)}\odot c^{'(1)}}^T {W^{(0)}\odot c^{'(0)}}^T  \label{eq:back_prop_style}
\end{align}
where $W^{(l)}$ denotes weight matrix of the layer $l, l \in \{0,\dots ,L\}$ and $c^{'(l)}$ is a vector of layer nodes activation functions's derivative and $\odot$ operation between matrix $M$ and vector $v$ of suitable dimension is defined as $M\odot v := M \cdot diag(v)$.

Next we will make use of the following well known matrix relations (see for instance \cite{bhatia1997a, gantmakher1959theory}):
\begin{align}
    tr(ABC)&=tr(BCA) \tag{'cyclic property of a trace'} \label{def:cyclic_trace} \\
    & \text{ for any three real matrices $A$ and $B$ such that products and traces involved are defined } \notag \\
    tr(AB)& < tr(A)tr(B) \text{ for $A$ and $B$ real and symmetric} \label{def:trace_trtr} \\
    \frac{tr(A)}{dim(A)}& \leq \sigma_{max}(A) \leq tr(A) \text{ for $A$ real symmetric} \label{ineq:tr_sigma}
\end{align}

Applying \eqref{ineq:tr_sigma} on \eqref{eq:back_prop_style} and further by applying \ref{def:cyclic_trace} to reorder the products of Eq.\ref{ineq:tr_sigma} and \ref{def:trace_trtr} we get:
\begin{align}
    \sigma_{max}(\bm{x}) &\leq tr(\zeta(x)) \leq dim(F)\sigma_{\psi}(x)\prod\limits_{l \in \{0, \dots, L\}} tr({W^{(l)}}^T W^{(l)})tr(diag(c'^{(l)})^2) \notag \\
    &\leq dim(F) C \sigma_{\psi}(x) \prod\limits_{l \in L_{f}} \sigma^2_{l}
\end{align}
where the last inequality follows by applying the left inequality in Eq.\ref{ineq:tr_sigma} and a 1-Lipschitz property (see \ref{appdef:lipschitz} for definition) of the activation functions by assumption. The constant $C$ comes from lower bound of Eq.\ref{ineq:tr_sigma} and is a product of dimensions of layers. By absorbing $dim(F)$ into $C$ the statement follows.
\end{proof}

\section{Background on Bregman divergences, Exponential family and Notation} \label{sec:appendix_bregman}
\noindent Let a neural network $f:\mathbb{R}^b \times \Upsilon \xrightarrow{} \mathbb{R}^d$ of $L$ layers be defined as the composition:
\begin{align}
    f(\bm{x},\overrightarrow{\bm{w}}) =&\varphi_L(\bm{W}^{(L)}) \circ \varphi_{L-1}(\bm{W}^{(L-1)}) \circ \dots \notag \\
    \circ &\varphi_1(\bm{W}^{(1)})(\bm{x}) \label{def:model}
\end{align}
where each vector function $\varphi_l(\bm{W}^{(l)})(\bm{v})=a_l\left(\bm{W}^{(l)}\bm{v}\right)$ is an activation function $a_l$ applied onto a result of matrix $\bm{W}^{(l)}$ and vector $\bm{v}$ product. We denoted collation of all network weights into a tensor as $\overrightarrow{\bm{w}} \in \Upsilon$.

For reasons to be revealed shortly we define loss function as Bregman divergence \cite{banerjee2005clustering} :
\begin{align}
    \mathcal{L}(\bm{z},\bm{y}) &= d_{\Phi}(\bm{z},\bm{y}) \notag \\
    &= \Phi(\bm{z}) - \Phi(\bm{y}) - \langle \bm{z}-\bm{y},\nabla_{y} \Phi(\bm{y})\rangle
    \label{def:bregman}
\end{align}
, where $\Phi: \mathbb{R}^{d} \xrightarrow{} \mathbb{R}$ is strictly convex function.

Overall, following the generalization framework of \cite{kawaguchi2017generalization} we aim to minimize $\ell(\overrightarrow{\bm{w}};\mathcal{D},f)$ given training dataset indexed by set $\mathcal{D}$ and hypothesis captured in composition of $f$:
\begin{align}
    \ell(\overrightarrow{\bm{w}};\mathcal{D},f)=\int\limits_{(\bm{x},\bm{y}) \in \mathcal{D}}  \mathcal{L}(f(\bm{x},\overrightarrow{\bm{w}}),\bm{y}) d\mathbb P(\bm{x},\bm{y}) \label{def:loss}
\end{align}

From now on we will abuse notation and use $\bm{w}$ instead of $\overrightarrow{\bm{w}}$ to denote either all or subset of weight(s) depending on the context. We will also use index instead of a function argument to denote dataset over which loss is evaluated, i.e. $\ell_{\mathcal{D}}(\bm{w})$ instead of $\ell(\overrightarrow{\bm{w}};\mathcal{D},f)$. And we refer to value of loss over batch $B_i \subset \mathcal{D}$ as $\ell_{B_i}(\bm{w})$.

Further in this paper we consider back-prop training of network $f$ using stochastic gradient descent (SGD) with the constant learning rate $\eta$ over mini-batch samples indexed by $B_i$
\begin{align}
    \bm{w}_{k+1}=&\bm{w}_k - \eta\nabla_{w} \ell_{B_i}(\bm{w}_k) \label{eq:sgd}
\end{align}
In case of mini-batch being whole dataset we may refer to it as (full) gradient descent (GD) throughout the text.

\subsection*{Usefull properties of Bregman divergence}
Minimizing square-loss, cross-entropy and in general negative-(log)likelihood (KL-divergence) and many other objectives can be suitably captured by choice of strictly convex, differentiable function $\Phi(\bm{x})$
\begin{itemize}
    \item square-loss: $\Phi(\bm{x}) := \frac{1}{2}\langle \bm{x},\bm{x} \rangle$
    \item KL-divergence: $\Phi(\bm{x}) := \sum\limits_{j=1}^{d}x_j\log_2x_j$ s.t. $\sum\limits_{j=1}^{d}x_j=1$ (this also covers use of cross-entropy loss for classification tasks)
\end{itemize} 
For reference see e.g. \cite{banerjee2005clustering, hiriart2012fundamentals}.
\\
Using the Bregman divergence as loss function allows us to derive general results for a wide range of losses, including both classification and prediction problems.

\noindent An important property of the Bregman divergence is that its derivative w.r.t the first argument at datum $(\bm{x}_s,\bm{y}_s)$ evaluates as
\begin{align}
    \nabla_{x}d_{\Phi}(x_s,y_s) = \nabla \Phi{(\bm{x_s})} - \nabla \Phi{(\bm{y_s})} \label{eq:bregman_derivative}
\end{align}
\noindent Further it can be shown that there exist an isomorphic dual space such that 
\begin{align}
    d_{\Phi}(\bm{z},\bm{y}) = d_{\Psi}(\nabla \Phi{(\bm{y})}, \nabla \Phi{(\bm{z})}) \label{eq:duality} \tag{duality}
\end{align}
where $\Psi$ is a convex conjugate to $\Phi$. For more details see \cite{hiriart2012fundamentals}. This notion will crucial in developing generalization error surrogate in the next section.

\subsection*{Mapping between Exponential families and Bregman divergence}
As presented in \cite{banerjee2005clustering}, Theorem 4, there exists a one-to-one mapping between the regular exp. family of distributions $p_{(\Psi,\bm{\theta}) (\bm{x})}$ generated by sufficient statistics, base measure and Bregman div. $d_{\Phi}(\bm{x},\bm{y})$, (\textit{informaly})
\begin{align}
    p_{(\Psi,\bm{\theta})} (\bm{\omega})=exp(-d_{\Phi}(T(\omega),\bm{\mu}))b_{\Phi}(T(\omega))     \label{def:Bregman_to_exp_family}
\end{align}
where the related exponential family has the following form 
\begin{align}
    p_{(\Psi,\bm{\theta})} (\omega)=\exp\left(\langle \bm{\theta},T(\omega) \rangle - \Psi(\bm{\theta})\right)dP_0(\omega)
    \label{def:exp_family}
\end{align} 
and 
\begin{align}
b_{\Phi}(T(\omega)) = \exp(\Phi(T(\omega))) \label{eq:b_omega} 
\end{align}
is uniquely determined given the base measure $P_0(\omega)$\footnote{Note that exp. family is defined with respect to some carrier measure. Then density $p_0$ correspond to Radon-Nykodym derivative $\frac{dP_0(\omega)}{d\lambda(\omega)}$ where $P_0$ is absolutely continuous w.r.t. the Lebesque or counting (carrier) measure $\lambda$ for continuous and discrete r.v. respectively in an alignment with \cite{Wainwright08graphicalmodels}, \cite{banerjee2005clustering}.}.
Further for a clearer notation and without loss of generality we assume embedding of inputs $\Omega$ into a real vector space of dimensionality $dim(X)$, i.e. $\Omega \subset \mathcal{B}(\mathbb{R}^{dim(X)})$, a $\sigma$-algebra of Borel sets on $\mathbb{R}^{dim(X)}$.
To make this explicit we replace $\omega$ in notation and let $\bm{x}$ denote the element of Borel sets $\sigma$-algebra on $\mathbb{R}^{dim(X)}$. 

Let \textit{mean} and \textit{natural} parameters be denoted $\bm{\mu}$ and $\bm{\theta}$ respectively. Since $\Phi$ and $\Psi$ are Legendre duals there are also following known properties, see. \cite{Wainwright08graphicalmodels}
\begin{align}
    &E_{\bm{\theta}}[T(\bm{x})] = \bm{\mu}(\bm{\theta}) \\
    &\nabla \Psi(\bm{\theta}) = \bm{\mu} , \nabla \Phi(\bm{\mu}) = \bm{\theta} \label{def:mean_natural_duality}
\end{align}
for $\bm{\mu} \in int(dom(\Phi))$ so that $\nabla \Phi$ exists.

\noindent The conjugate function can be expressed as $\Phi(\bm{\mu})=\langle \nabla \Phi(\bm{\mu}), \bm{\mu} \rangle - \Psi(\nabla \Phi(\bm{\mu}))$,\footnote{follows from definition of $\Psi(\bm{\theta}):=\sup_{\bm{\mu} \in dom(\Phi)}\langle \bm{\theta} , \bm{\mu} \rangle - \Phi(\bm{\mu}))$ and because the supremum is attained at $\bm{\theta}=\nabla \Phi(\bm{\mu})$. We skip technicalities in definitions for a supremum to be attainable for the sake of space and brevity, see \cite{Wainwright08graphicalmodels} for details.} and thus we can write log likelihood of $p_{(\Psi,\bm{\theta})}(\bm{t})$ from Eq.\eqref{def:exp_family} as
\begin{align}
    \langle \bm{t},\bm{\theta} \rangle - \Psi(\bm{\theta}) &= (\langle \bm{\mu} , \bm{\theta} \rangle - \Psi(\bm{\theta}))+\langle \bm{t}-\bm{\mu} ,\theta \rangle \notag \\
    &= \Phi(\bm{\mu})+\langle \bm{t}-\bm{\mu},\nabla \Phi(\bm{\mu}) \rangle  
\end{align}

\noindent Therefore for any $\bm{t} \in dom(\Phi)$ and $\bm{\mu} \in int(dom(\Phi))$ we can write:
\begin{align}
    \langle \bm{t},\bm{\theta} \rangle - \Psi(\bm{\theta}) -\Phi(\bm{t}) &= -d_{\Phi}(\bm{t},\bm{\mu}) \label{eq:bregman_to_expon_expargument}
\end{align}

\subsubsection*{Max Likelihood in Exp. family} \label{sec:MLE_lower_bound_exp_fam}
Assume r.v. $X$ follows distribution from exponential family w.r.t. to some base measure as defined in \eqref{def:exp_family}. A duality of $\Phi$ and $\Psi$ leads to so called Fenchel's inequality for $\Psi(\theta)$, for reference see \cite{Wainwright08graphicalmodels}, variational representation of cumulant function, Theorem 3.4:
\begin{align}
    0 \geq \langle \theta,\mu \rangle-\Psi(\theta)-\Phi(\mu) \label{eq:MLE_lower_bound_exp_fam}
\end{align}
, where $\theta \in \Theta$ belongs to natural parameter space and $\mu \in \mathcal{M}^{\circ}$ is from interior of mean value parameter space. Relating it to \eqref{eq:bregman_to_expon_expargument} we see the right hand part of \eqref{eq:MLE_lower_bound_exp_fam} is a negative Bregman divergence $-d_{\Phi}(\bm{\theta},\bm{\mu})$.

It is well known that by maximizing this lower bound, corresponding to $-d_{\Phi}(\bm{\theta},\bm{\mu})$, the inequality \eqref{eq:MLE_lower_bound_exp_fam} turns into equality if and only if the mean value parameters are equal to the observed moments: $E_{\Psi}[\bm{t}(\bm{x}_{\alpha})]=\hat{\mu}_{\alpha} = \bm{y}_{\alpha}$ for $\alpha \in \mathcal{I}(V)$, where ${I}(V)$ denotes index set of observed nodes (data points) $V$, \cite{Wainwright08graphicalmodels}. In such a case $\Phi(\hat{\mu})$ is a negative Shannon entropy of the distribution matching given moments $\hat{\mu}_{\alpha}$ and it is maximal among all such distributions \cite{Wainwright08graphicalmodels,hiriart2012fundamentals}.

\subsection*{Dually coupled Exponential family}
There is an intriguing property of the Bregman divergences stating that the Bregman divergence $d_{\Psi}$ equals to the Bregman divergence $d_{\Phi}$ on the dual space defined by gradient mapping $\nabla \Phi$, for details see \cite{bauschke2011convex, banerjee2004information}. 

Thus the "dual" Bregman divergence $d_{\Psi}(\nabla \Phi(\bm{y}), \nabla \Phi(f(\bm{x}|\bm{w})))$ defines a dually coupled exp. family $p^*_{\Phi,{f(\bm{x}|\bm{w})}}(\bm{y})$ over the dual space\footnote{To avoid confusion with other mean values and more we do not use \textit{mean value} and \text{natural} labels of the duals in our settings}.

In lieu of this duality there are two dually coupled parametrizations of the related exponential family \dots 
\begin{enumerate} \label{def:dual_interpretation_of_gen_err}
    \item \textit{(primal)} \label{def:primal} \dots defined by a cumulant function $\Psi$ and neural network $f(\bm{x}|\bm{w})$ being a parametrized function of sufficient statistics. Mean value parameters are given by targets $\bm{y}$, given by Eq. \eqref{def:Bregman_to_exp_family}.
    \item \textit{(dual)} \label{def:dual} \dots defined by a cumulant function $\Phi$ and sufficient statistics $\nabla \Phi(\bm{y})$. Neural network $f(\bm{x}|\bm{w})$ is a parametrized subspace of a natural parameter space of the family in this formulation. 
\end{enumerate}

Note that the gradient mapping to the dual space is fully determined by the choice of $\Phi$ (or equivalently $\Psi$). For instance in case of the square loss, i.e. $\Phi(\bm{x}) := \frac{1}{2}\langle \bm{x},\bm{x} \rangle$, the gradient mapping $\nabla \Phi(\bm{y})$ onto dual space is an identity map.

\begin{definition} \label{appdef:lipschitz}
A function $f$ such that $|f(x)-f(y)| <= C |x-y|$ for all $x$ and $y$, where $C$ is a constant independent of $x$ and $y$, is called $C-$Lipschitz function.
\end{definition}
For example, any continuous function with the first derivative bounded in absolute value by $B$ must be $B-$Lipschitz.
\end{document}